\documentclass{article} 
\usepackage{iclr2022_conference,times}


\usepackage{amsmath,amsfonts,bm}









\def\eqref#1{equation~\ref{#1}}









\def\1{\bm{1}}










\DeclareMathAlphabet{\mathsfit}{\encodingdefault}{\sfdefault}{m}{sl}
\SetMathAlphabet{\mathsfit}{bold}{\encodingdefault}{\sfdefault}{bx}{n}













\usepackage{hyperref}
\usepackage{url}


\usepackage{amsmath, amssymb, amsthm}

\usepackage[ruled]{algorithm2e}
\usepackage{algorithmic}

\usepackage{graphicx}
\graphicspath{ {./Figures/} }

\newtheorem{theorem}{Theorem}
\newtheorem{lemma}{Lemma}
\newtheorem*{lemma*}{Lemma}

\newtheoremstyle{dotless}{}{}{\itshape}{}{\bfseries}{}{ }{}
\theoremstyle{dotless}
\newtheorem*{statement}{Statement:}

\newcommand{\randf}[2]{\ensuremath{#1( \cdot | #2)}}


\title{Policy Smoothing for Provably Robust\\ Reinforcement Learning}


\author{Aounon Kumar, Alexander Levine, Soheil Feizi\\
Department of Computer Science\\
University of Maryland - College Park, USA\\
\texttt{aounon@umd.edu,\{alevine0,sfeizi\}@cs.umd.edu}\\
}

\usepackage{todonotes}

%

\iclrfinalcopy 
\begin{document}

\maketitle

\begin{abstract}
The study of provable adversarial robustness for deep neural networks (DNNs) has mainly focused on {\it static} supervised learning tasks such as image classification. 
However, DNNs have been used extensively in real-world {\it adaptive} tasks such as reinforcement learning (RL), making such systems vulnerable to adversarial attacks as well.
Prior works in provable robustness in RL seek to certify the behaviour of the victim policy at every time-step against a non-adaptive adversary using methods developed for the static setting.
But in the real world, an RL adversary can infer the defense strategy used by the victim agent by observing the states, actions, etc. from previous time-steps and adapt itself to produce stronger attacks in future steps (e.g., by focusing more on states critical to the agent's performance).
We present an efficient procedure, designed specifically to defend against an adaptive RL adversary, that can directly certify the total reward without requiring the policy to be robust at each time-step.
Focusing on randomized smoothing based defenses, our main theoretical contribution is to prove an {\it adaptive version} of the Neyman-Pearson Lemma -- a key lemma for smoothing-based certificates -- where the adversarial perturbation at a particular time can be a stochastic function of current and previous observations and states as well as previous actions.
Building on this result, we propose {\it policy smoothing} where the agent adds a Gaussian noise to its observation at each time-step before passing it through the policy function. Our robustness certificates guarantee that the final total reward obtained by policy smoothing remains above a certain threshold, even though the actions at intermediate time-steps may change under the attack. We show that our certificates are {\it tight} by constructing a worst-case scenario that achieves the bounds derived in our analysis. 
Our experiments on various environments like Cartpole, Pong, Freeway and Mountain Car show that our method can yield meaningful robustness guarantees in practice.

\end{abstract}

\section{Introduction}
Deep neural networks (DNNs) have been widely employed for reinforcement learning (RL) problems as they enable the learning of policies directly from raw sensory inputs, like images, with minimal intervention from humans.
From achieving super-human level performance in video-games \citep{MnihKSGAWR13, SchulmanLAJM15, MnihBMGLHSK16}, Chess \citep{Silver2017} and Go \citep{SilverHMGSDSAPL16} to carrying out complex real-world tasks, such as controlling a robot \citep{LevineFDA16} and driving a vehicle \citep{BojarskiTDFFGJM16}, deep-learning based algorithms have not only established the state of the art, but also become more effortless to train.
However, DNNs have been shown to be susceptible to tiny malicious perturbations of the input designed to completely alter their predictions \citep{Szegedy2014, MadryMSTV18, GoodfellowSS14}.
In the RL setting, an attacker may either directly corrupt the observations of an RL agent \citep{HuangPGDA17, BehzadanM17, PattanaikTLBC18} or act adversarially in the environment \citep{GleaveDWKLR20} to significantly degrade the performance of the victim agent.
Most of the adversarial defense literature has focused mainly on classification tasks \citep{KurakinGB17, BuckmanRRG18, GuoRCM18, DhillonALBKKA18, LiL17, GrosseMP0M17, GongWK17}.
In this paper, we study a defense procedure for RL problems that is provably robust against norm-bounded adversarial perturbations of the observations of the victim agent.

\textbf{Problem setup.} A reinforcement learning task is commonly described as a game between an agent and an environment characterized by the Markov Decision Process (MDP) $M = (S, A, T, R, \gamma)$, where $S$ is a set of states, $A$ is a set of actions, $T$ is the transition probability function, $R$ is the one-step reward function and $\gamma \in [0, 1]$ is the discount factor.
However, as described in Section~\ref{sec:notation}, our analysis applies to an even more general setting than MDPs.
At each time-step $t$, the agent makes an observation $o_t = o(s_t) \in \mathbb{R}^d$ which is a probabilistic function of the current state of the environment, picks an action $a_t \in A$ and receives an immediate reward $R_t=R(s_t,a_t)$.
We define an adversary as an entity that can corrupt the agent's observations of the environment by augmenting them with a perturbation $\epsilon_t$ at each time-step $t$ which can depend on the states, actions, observations, etc., generated so far.
We use $\epsilon = (\epsilon_1, \epsilon_2, \ldots)$ to denote the entire sequence of adversarial perturbations. The goal of the adversary is to minimize the total reward obtained by the agent policy $\pi$ while keeping the overall $\ell_2$-norm of the perturbation within a budget $B$. Formally, the adversary seeks to optimize the following objective:
\begin{align*}
    \underset{\epsilon}{\min} \; &\mathbb{E}_{\pi} \left[ \sum_{t=0}^\infty \gamma^t R_t \right]\!, \; \text{where $R_t = R(s_t, a_t), a_t \sim \randf{\pi}{o(s_t) + \epsilon_t}$}\\
    \text{s.t.} \; &\|(\epsilon_1, \epsilon_2, \ldots)\|_2 = \sqrt{\sum_{t = 0}^\infty \|\epsilon_t\|_2^2} \leq B.
\end{align*}
Note that the size of the perturbation $\epsilon_t$ in each time-step $t$ need not be the same and the adversary may choose to distribute the budget $B$ over different time-steps in a way that allows it to produce a stronger attack.
Also, our formulation accounts for cases when the agent may only partially observe the state of the environment, making $M$ a Partially Observable Markov Decision Process (POMDP).

\textbf{Objective.}
Our goal in provably robust RL is to design a policy $\pi$ such that the total reward in the presence of a norm-bounded adversary is guaranteed to remain above a certain threshold, i.e.,
\begin{equation}
    \underset{\epsilon}{\min} \; \mathbb{E}_{\pi} \left[ \sum_{t=0}^\infty \gamma^t R_t \right] \geq \underline{R}, \; \text{s.t.} \; \|\epsilon\|_2 \leq B.
\end{equation} \label{eq:threat_model}

In other words, no norm-bounded adversary can lower the expected total reward of the policy $\pi$ below a certain threshold. 
In our discussion, we restrict out focus to finite-step games that end after $t$ time-steps.
This is a reasonable approximation for infinite games with $\gamma < 1$, as for a sufficiently large $t$, $\gamma^t$ becomes negligibly small.
For games where $\gamma = 1$, $R_t$ must become sufficiently small after a finite number of steps to keep the total reward finite.


\textbf{Step-wise vs. episodic certificates}. 
Previous works on robust RL have sought to certify the behaviour of the policy function at {\it each} time-step of an episode, e.g., the output of a Deep Q-Network \citep{LutjensEH19} and the action taken for a given state \citep{Zhang-NeurIPS2020}.
Ensuring that the behaviour of the policy remains unchanged in each step can also certify that the final total reward remains the same under attack.
However, if the per-step guarantee fails at even one of the intermediate steps, the certificate on the total reward becomes vacuous or impractical to compute (as noted in Appendix E of \cite{Zhang-NeurIPS2020}).
Our approach gets around this issue by directly certifying the final total reward for the entire episode {\it without} requiring the policy to be provably robust at each intermediate step.
Also, the threat-model we consider is more general as we allow the adversary to choose the size of the perturbation for each time-step.
Thus, our method can defend against more sophisticated attacks that focus more on states that are crucial for the victim agent's performance.

\begin{figure}[t]
    \centering
    \includegraphics[width=\textwidth]{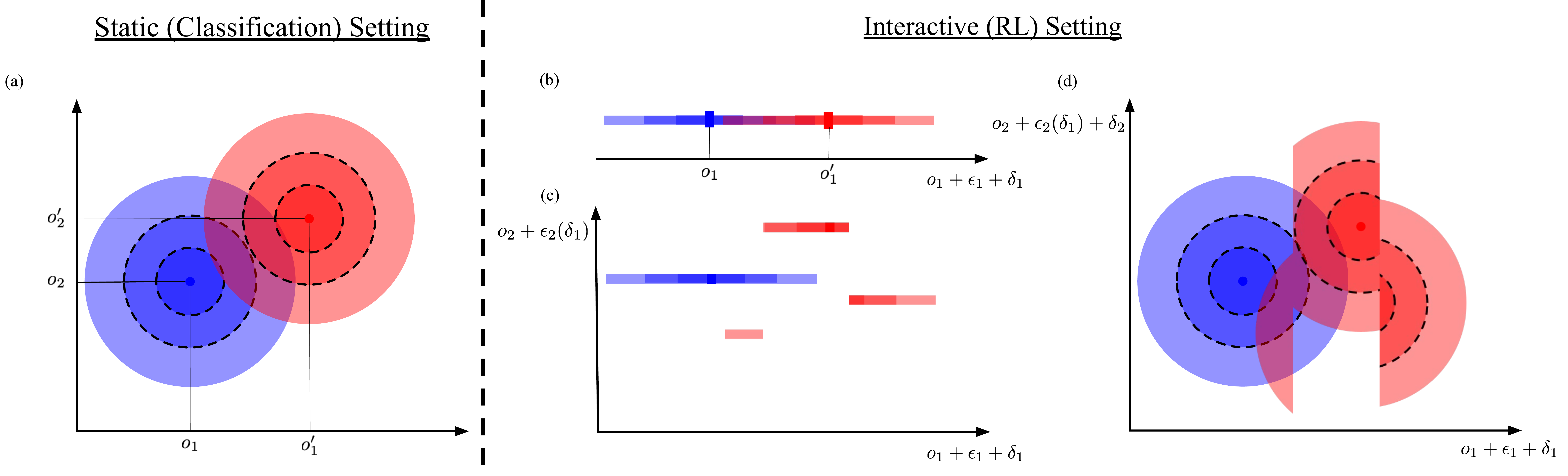}
    \caption{ The standard \cite{cohen19} smoothing-based robustness certificate relies on the clean and the adversarial distributions being isometric Gaussians (panel a).
    However, adding noise to sequential observations in an RL setting (panels b-d) \textit{does not} result in an isometric Gaussian distribution over the space of observations. In all figures, the distributions associated with clean and adversarially-perturbed values are shown in blue and red, respectively.}
    \vspace{-8mm}
    \label{fig:intro_figure}
\end{figure}

\textbf{Technical contributions.} In this paper, we study a defense procedure based on ``randomized smoothing'' \citep{cohen19,LecuyerAG0J19, LiCWC19, SalmanLRZZBY19} since at least in ``static" settings, its robustness guarantee scales up to high-dimensional problems and does not need to make stringent assumptions about the model. We ask: {\it can we utilize the benefits of randomized smoothing to make a general high-dimensional RL policy provably robust against adversarial attacks?}
The answer to this question turns out to be non-trivial as the adaptive nature of the adversary in the RL setting makes it difficult to apply certificates from the static setting.
For example, the $\ell_2$-certificate by \cite{cohen19} critically relies on the clean and adversarial distributions being isometric Gaussians (Figure~\ref{fig:intro_figure}-a). However, in the RL setting, the adversarial perturbation in one step might depend on states, actions, observations, etc., of the previous steps, which could in turn depend on the random Gaussian noise samples added to the observations in these steps. Thus, the resulting adversarial distribution need not be isometric as in the static setting (Figure~\ref{fig:intro_figure}-(b-d)). For more details on this example, see Appendix~\ref{sec:static-adaptive}.

Our main theoretical contribution is to prove an {\it adaptive version} of the Neyman-Pearson lemma~\citep{Neyman1992} to produce robustness guarantees for RL. We emphasize that this is {\it not} a straightforward extension (refer to Appendix~\ref{sec:proof-lem-prob-det}, \ref{sec:proof-lem-adp-np} and~\ref{sec:proof-thm-robustness} for the entire proof). 
To prove this fundamental result, we first eliminate the effect of randomization in the adversary (Lemma~\ref{lem:prob-det}) by converting a general adversary to one where the perturbation at each time-step is a deterministic function of the previous states, actions, observations, etc., and showing that the modified adversary is as strong as the general one.
Then, we prove the adaptive Neyman-Pearson lemma where we show that, in the worst-case, the deterministic adversary can be converted to one that uses up the entire budget $B$ in the first coordinate of the perturbation in the first time-step (Lemma \ref{lem:adp-np}).
Finally, we derive the robustness guarantee under an isometric Gaussian smoothing distribution (Theorem~\ref{thm:robustness}).
In section~\ref{sec:tightness}, we establish the {\it tightness} of our certificates by constructing the worst-case environment-policy pair which attains our derived bounds.
More formally, out of all the environment-policy pairs that achieve a certain total reward with probability $p$, we show a worst-case environment-policy pair and a corresponding adversary such that the probability of achieving the same reward under the presence of the adversary is minimum.
A discussion on the Neyman-Pearson lemma in the context of randomized smoothing is available in Appendix~\ref{sec:neyman-pearson-smoothing}.

Building on these theoretical results, we propose {\bf Policy Smoothing}, a simple model-agnostic randomized-smoothing based technique that can provide certified robustness without increasing the computational complexity of the agent's policy.
Our main contribution is to show that by augmenting the policy's input by a random smoothing noise, we can achieve provable robustness guarantees on the total reward under a norm-bounded adversarial attack (Section \ref{sec:policy-smoothing}). 
Policy Smoothing does not need to make assumptions about the agent's policy function and is also oblivious to the workings of RL environment. Thus, this method can be applied to any RL setting without having to make restrictive assumptions on the environment or the agent.
In section~\ref{sec:notation}, we model the entire adversarial RL process under Policy Smoothing as a sequence of interactions between a system \textbf{A}, which encapsulates the RL environment and the agent, and a system \textbf{B}, which captures the addition of the adversarial perturbation and the smoothing noise to the observations.
Our theoretical results do not require these systems to be Markovian and can thus have potential applications in real-time decision-making processes that do not necessarily satisfy the Markov property.

\textbf{Empirical Results.} We use four standard Reinforcement Learning benchmark tasks to evaluate the effectiveness of our defense and the significance of our theoretical results: the Atari games `Pong' and `Freeway' \citep{MnihKSGAWR13} and the classical `Cartpole' and `Mountain Car' control environments \citep{6313077, Moore1990EfficientML} -- see Figure \ref{fig:game_pix}. We find that our method provides highly nontrivial certificates. In particular, on at least two of the tasks, `Pong' and `cartpole', the {\it provable lower bounds} on the average performances of the defended agents, against any adversary, exceed the observed average performances of undefended agents under a practical attack.

\section{Prior Work}
{\bf Adversarial RL.} Adversarial attacks on RL systems have been extensively studied in recent years.
DNN-based policies have been attacked by either directly corrupting their inputs \citep{HuangPGDA17, BehzadanM17, PattanaikTLBC18} or by making adversarial changes in the environment~\citep{GleaveDWKLR20}.
Empirical defenses based on adversarial training, whereby the dynamics of the RL system is augmented with adversarial noise, have produced good results in practice \citep{Kamalaruban2020, Vinitsky2020}.
\cite{zhang2021robust} propose training policies together with a learned adversary in an online alternating fashion to achieve robustness to perturbations of the agent's observations.

\textbf{Robust RL.} Prior work by \cite{LutjensEH19} has proposed a `certified' defense against adversarial attacks to observations in deep reinforcement learning, particularly for Deep Q-Network agents. However, that work essentially only guarantees the stability of the \textit{network approximated Q-value} at each time-step of an episode. By contrast, our method provides a bound on the expected \textit{true reward} of the agent under any norm-bounded adversarial attack.

\citet{Zhang-NeurIPS2020} certify that the action in each time-step remains unchanged under an adversarial perturbation of fixed budget for every time-step.
This can guarantee that the final total reward obtained by the robust policy remains the same under attack.
However, this approach would not be able to yield any robustness certificate if even one of the intermediate actions changed under attack.
Our approach gets around this difficulty by directly certifying the total reward, letting some of the intermediate actions of the robust policy to potentially change under attack.
For instance, consider an RL agent playing Atari Pong. The actions taken by the agent when the ball is close to and approaching the paddle are significantly more important than the ones when the ball is far away or retreating from the paddle.
By allowing some of the intermediate actions to potentially change, our approach can certify for larger adversarial budgets and provide a more fine-grained control over the desired total-reward threshold.
Moreover, we study a more general threat model where the adversary may allocate different attack budgets for each time-step focusing more on the steps that are crucial for the agent's performance, e.g., attacking a Pong agent when the ball is close to the paddle. 

\textbf{Provable Robustness in Static Settings:}
Notable provable robustness methods in static settings are based on interval-bound propagation \citep{gowal2018effectiveness, HuangSWDYGDK19, dvijotham2018training, mirman18b}, curvature bounds \citep{WongK18, Raghunathan2018, Chiang20, Singla2019, singla2020secondorder, singla-icml2021}, randomized smoothing \citep{cohen19, LecuyerAG0J19, LiCWC19, SalmanLRZZBY19, Levine-ICML21}, etc. Certified robustness has also been extended to problems with structured outputs such as images and sets \citep{kumar-center-smoothing}.  Focusing on Gaussian smoothing, \citet{cohen19} showed that if a classifier outputs a class with some probability under an isometric Gaussian noise around an input point, then it will output that class with high probability at any perturbation of the input within a particular $\ell_2$ distance. \citet{Kumar0FG20} showed how to certify the expectation of softmax scores of a neural network under Gaussian smoothing by using distributional information about the scores.

\section{Preliminaries and Notations}
\label{sec:notation}

We model the finite-step adversarial RL framework as a $t$-round communication between two systems $\mathbf{A}$ and $\mathbf{B}$
(Figure~\ref{fig:adv_framework}).
System $\mathbf{A}$ represents the RL game. It contains the environment $M$ and the agent, and when run independently, simulates the interactions between the two for some given policy $\pi$.
At each time-step $i$, it generates a {\it token} $\tau_i$ from some set $\mathcal{T}$, which is a tuple of the current state $s_i$ and its observation $o_i$, the action $a_{i-1}$ in the previous step (and potentially some other objects that we ignore in this discussion), i.e., $\tau_i = (s_i, a_{i-1}, o_i, \ldots) \in S \times A \times \mathbb{R}^d \times \ldots = \mathcal{T}$.
For the first step, replace the action in $\tau_1$ with some dummy element $*$ from the action space $A$.
System $\mathbf{B}$ comprises of the adversary and the smoothing distribution which generate an adversarial perturbation $\epsilon_i$ and a smoothing noise vector $\delta_i$, respectively, at each time-step $i$, the sum of which is denoted by an {\it offset} $\eta_i = \epsilon_i + \delta_i \in \mathbb{R}^d$.

When both systems are run together in an interactive fashion, in each round $i$, system $\mathbf{A}$ generates $\tau_i$ as a probabilistic function of $\tau_1, \eta_1, \tau_2, \eta_2, \ldots, \tau_{i-1}, \eta_{i-1}$, i.e., $\tau_i: (\mathcal{T} \times \mathbb{R}^d)^{i-1} \rightarrow \Delta (\mathcal{T})$.
$\tau_1$ is sampled from a fixed distribution.
It passes $\tau_i$ to $\mathbf{B}$, which generates $\epsilon_i$ as a probabilistic function of $\{\tau_j, \eta_j\}_{j=1}^{i-1}$ and $\tau_i$, i.e., $\epsilon_i: (\mathcal{T} \times \mathbb{R}^d)^{i-1} \times \mathcal{T} \rightarrow \Delta (\mathbb{R}^d)$ and adds a noise vector $\delta_i$ sampled independently from the smoothing distribution to obtain $\eta_i$.
It then passes $\eta_i$ to $\mathbf{A}$ for the next round.
After running for $t$ steps, a deterministic or random 0/1-function $h$ is computed over all the tokens and offsets generated.
We are interested in bounding the probability with which $h$ outputs 1 as a function of the adversarial budget $B$.
In the RL setting, $h$ could be a function indicating whether the total reward is above a certain threshold or not.

\begin{figure}[t]
\vspace{-1cm}
    \centering
    \includegraphics[width=0.9\textwidth]{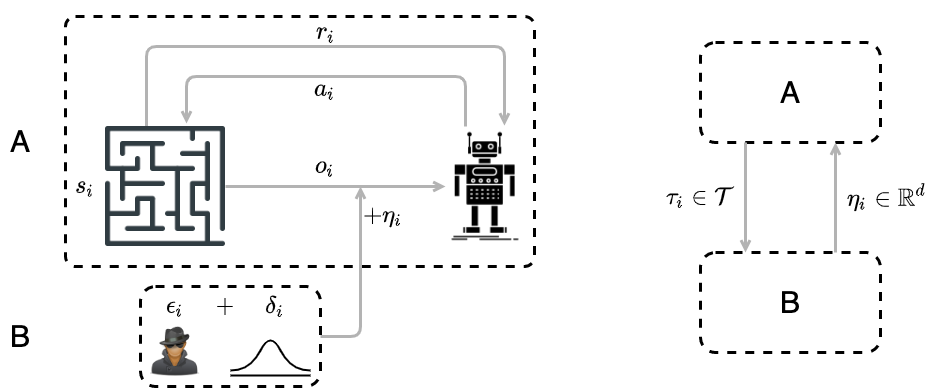}
    \caption{Adversarial robustness framework.}
    \label{fig:adv_framework}
    \vspace{-4mm}
\end{figure}

\section{Provably Robust RL}
\subsection{Adaptive Neyman-Pearson Lemma}
\label{sec:adaptive-NP}

Let $X$ be the random variable representing the tuple $z = (\tau_1, \eta_1, \tau_2, \eta_2, \ldots, \tau_t, \eta_t) \in (\mathcal{T} \times \mathbb{R}^d)^t$ when there is no adversary, i.e., $\epsilon_i = 0$ and $\eta_i = \delta_i$ is sampled directly form the smoothing distribution $\mathcal{P}$.
Let $Y$ be the random variable representing the same tuple in the presence of a general adversary $\epsilon$ satisfying $\|\epsilon\|_2 \leq B$.
Thus, if $h(X) = 1$ with some probability $p$, we are interested in deriving a lower-bound on the probability of $h(Y) = 1$ as a function of $p$ and $B$.
Let us now define a deterministic adversary $\epsilon^{dt}$ for which the adversarial perturbation at each step is a deterministic function of the tokens and offsets of the previous steps and the token generated in the current step. i.e., $\epsilon^{dt}_i: (\mathcal{T} \times \mathbb{R}^d)^{i-1} \times \mathcal{T} \rightarrow \mathbb{R}^d$. Let $Y^{dt}$ be its corresponding random variable.
Then, we have the following lemma that converts a probabilistic adversary into a deterministic one.
\begin{lemma}[{\bf Reduction to Deterministic Adversaries}]
\label{lem:prob-det}
For any general adversary $\epsilon$ and an $\Gamma \subseteq (\mathcal{T} \times \mathbb{R}^d)^t$, there exists a deterministic adversary $\epsilon^{dt}$ such that,
\[\mathbb{P}[Y^{dt} \in \Gamma] \leq \mathbb{P}[Y \in \Gamma], \]
where $Y^{dt}$ is the random variable for the distribution defined by the adversary $\epsilon^{dt}$.
\end{lemma}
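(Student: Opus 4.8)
The plan is to derandomize the adversary by an averaging argument: first realize the randomized adversary $\epsilon$ as a \emph{deterministic} adversary that is fed an independent stream of ``coin flips'', then observe that $\mathbb{P}[Y\in\Gamma]$ is the average, over the value of these coin flips, of the corresponding quantities for the resulting family of deterministic adversaries, and finally select the coin-flip outcome that does at least as well as the average. This also forces the selected deterministic adversary to respect the budget, since a realization of a random vector of $\ell_2$-norm $\le B$ a.s. has $\ell_2$-norm $\le B$ a.s.

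Concretely, I would work on a single probability space carrying mutually independent random elements: the seed of $\tau_1$; auxiliary uniforms $V_1,\dots,V_t$ encoding the internal randomness of $\mathbf{A}$, so that $\tau_i$ is a measurable function of $(\{\tau_j,\eta_j\}_{j<i},V_i)$; the smoothing noises $\delta_1,\dots,\delta_t\sim\mathcal{P}$; and auxiliary uniforms $U_1,\dots,U_t$. Since $\mathbb{R}^d$ is a standard Borel space, the Markov kernel defining $\epsilon_i$ admits a measurable randomization: there is a measurable map $g_i$ with $g_i(\{\tau_j,\eta_j\}_{j<i},\tau_i,U_i)$ distributed as $\epsilon_i(\cdot\mid\{\tau_j,\eta_j\}_{j<i},\tau_i)$ when $U_i\sim\mathrm{Unif}[0,1]$. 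Running the $\mathbf{A}$--$\mathbf{B}$ recursion with $\epsilon_i:=g_i(\cdot,U_i)$ and $\eta_i=\epsilon_i+\delta_i$ produces a trajectory $Z=(\tau_1,\eta_1,\dots,\tau_t,\eta_t)$; because each $U_i$ is fresh (independent of the history up to step $i$), this reproduces exactly the conditional-independence structure of the randomized adversary, so $\mathrm{Law}(Z)=\mathrm{Law}(Y)$. For each fixed $u=(u_1,\dots,u_t)$, the maps $\epsilon^{dt,u}_i(\{\tau_j,\eta_j\}_{j<i},\tau_i):=g_i(\{\tau_j,\eta_j\}_{j<i},\tau_i,u_i)$ form a deterministic adversary of the required type; write $Y^{u}$ for its trajectory random variable.

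Because $U=(U_1,\dots,U_t)$ is independent of the seed, the $V_i$ and the $\delta_i$, conditioning on $U=u$ simply substitutes the constant $u$ into the recursion, so the conditional law of $Z$ given $U=u$ coincides with $\mathrm{Law}(Y^{u})$. A monotone-class argument gives measurability of $\varphi(u):=\mathbb{P}[Y^{u}\in\Gamma]$, whence the law of total probability yields $\mathbb{P}[Y\in\Gamma]=\mathbb{P}[Z\in\Gamma]=\mathbb{E}\big[\varphi(U)\big]$. A bounded random variable cannot strictly exceed its mean almost surely, so $\{u:\varphi(u)\le\mathbb{P}[Y\in\Gamma]\}$ has positive $\mathrm{Law}(U)$-probability; and since $\|\epsilon\|_2\le B$ a.s. forces $\mathbb{P}[\,\|\epsilon^{dt,u}\|_2\le B\,]=1$ for $\mathrm{Law}(U)$-a.e.\ $u$, I can pick $u^\star$ satisfying both conditions and set $\epsilon^{dt}:=\epsilon^{dt,u^\star}$.

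The main obstacle is measure-theoretic bookkeeping rather than any genuinely deep step: one must (i) justify the kernel randomizations $g_i$, which needs $\mathcal{T}$ and $\mathbb{R}^d$ to be standard Borel (a benign assumption in this setting); (ii) verify that the seed-driven interactive recursion really reproduces $\mathrm{Law}(Y)$, and after conditioning on $U=u$ reproduces $\mathrm{Law}(Y^{u})$ -- this is where care is needed so that the dependence structure between $\mathbf{A}$ and $\mathbf{B}$ is faithfully respected; and (iii) check measurability of $u\mapsto\varphi(u)$ so that Fubini/total probability applies. Once these are in place, the averaging and selection steps are immediate.
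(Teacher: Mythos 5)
Your argument is correct, and it rests on the same underlying principle as the paper's proof --- derandomization by averaging, i.e., some realization of the adversary's internal randomness does at least as well for the adversary as the average --- but the execution is genuinely different. The paper derandomizes \emph{one step at a time}: it conditions on the visible history up to $\tau_j$, notes that conditionally $\epsilon_j$ is a mixture over fixed vectors $\gamma$, replaces it by the $\gamma$ minimizing the conditional probability of $\Gamma\cap\mathcal{H}$, integrates over histories, and iterates over $j$. You instead pull \emph{all} of the adversary's randomness out at once into an exogenous independent seed $U=(U_1,\dots,U_t)$ via kernel randomization, write $\mathbb{P}[Y\in\Gamma]=\mathbb{E}_U[\varphi(U)]$, and select a single seed $u^\star$ with $\varphi(u^\star)\le\mathbb{E}_U[\varphi(U)]$. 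Your route buys several things: it avoids the iteration entirely; it does not require the conditional minimizer to exist or to depend measurably on the history (both of which the paper's $\arg\min$ quietly assumes); and it handles the global budget constraint cleanly, since $\|\epsilon\|_2\le B$ a.s.\ transfers to almost every seed by Fubini --- though you should note that the resulting $\epsilon^{dt}$ then respects the budget only almost surely over trajectories, which is all that is needed downstream and can otherwise be fixed by redefining it on a null set. The cost is the standard-Borel and regular-conditional-probability bookkeeping you flag in (i)--(iii), none of which is problematic for $\mathcal{T}\times\mathbb{R}^d$ as used here. The paper's step-wise version, in exchange, produces an adversary that is conditionally minimizing at every step rather than merely no worse than average overall, a slightly stronger but unused property.
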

This lemma says that for any adversary (deterministic or random) and a subset $\Gamma$ of the space of $z$, there exists a deterministic adversary which assigns a lower probability to $\Gamma$ than the general adversary.
In the RL setting, this means that the probability with which a smoothed policy achieves a certain reward value under a general adversary is lower-bounded by the probability of the same under a deterministic adversary.
The intuition behind this lemma is that out of all the possible values that the internal randomness of the adversary may assume, there exists a sequence of values that assigns the minimum probability to $\Gamma$ (over the randomness of the environment, policy, smoothing noise, etc.).
We defer the proof to the appendix.

Next, we formulate an adaptive version of the Neyman-Pearson lemma for the case when the smoothing distribution $\mathcal{P}$ is an isometric Gaussian $\mathcal{N}(0, \sigma^2 I)$.
If we applied the classical Neyman-Pearson lemma on the distributions of $X$ and $Y^{dt}$, it will give us a characterization of the worst-case 0/1 function among the class of functions that achieve a certain probability $p$ of being 1 under the distribution of $X$ that has the minimum probability of being 1 under $Y^{dt}$.
Let $\mu_X$ and $\mu_{Y^{dt}}$ be the probability density function of $X$ and $Y^{dt}$, respectively.
\begin{lemma}[{\bf Neyman-Pearson Lemma, 1933}]\label{lem:neyman-pearson}
 If $\Gamma_{Y^{dt}} = \{ z \in (\mathcal{T} \times \mathbb{R}^d)^t \mid \mu_{Y^{dt}}(z) \leq q \mu_X(z)\}$ for some $q \geq 0$ and $\mathbb{P} [h(X) = 1] \geq \mathbb{P} [X \in \Gamma_{Y^{dt}}]$, then $\mathbb{P} [h(Y^{dt}) = 1] \geq \mathbb{P} [Y^{dt} \in \Gamma_{Y^{dt}}]$.
\end{lemma}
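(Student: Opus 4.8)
The plan is to run the standard 1933 Neyman--Pearson argument: the likelihood-ratio region is at least as good as any competing test of the same (or smaller) size. First I would pass to the acceptance-probability representation of the (possibly randomized) $0/1$ function $h$: set $\phi(z) := \mathbb{P}[h(z) = 1] \in [0,1]$, so that $\mathbb{P}[h(X)=1] = \int \phi\,\mu_X$ and $\mathbb{P}[h(Y^{dt})=1] = \int \phi\,\mu_{Y^{dt}}$, where the integrals are over $(\mathcal{T}\times\mathbb{R}^d)^t$ against a common $\sigma$-finite measure for which $\mu_X,\mu_{Y^{dt}}$ are the densities. Let $g := \mathbf{1}_{\Gamma_{Y^{dt}}}$ be the indicator of the likelihood-ratio region; it is itself the acceptance function of a deterministic rule, and $\mathbb{P}[X\in\Gamma_{Y^{dt}}] = \int g\,\mu_X$, $\mathbb{P}[Y^{dt}\in\Gamma_{Y^{dt}}] = \int g\,\mu_{Y^{dt}}$. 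In this notation the hypothesis reads $\int (\phi - g)\,\mu_X \ge 0$ and the desired conclusion reads $\int (\phi - g)\,\mu_{Y^{dt}} \ge 0$.

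The crux is the pointwise inequality
\[ \bigl(\phi(z) - g(z)\bigr)\bigl(\mu_{Y^{dt}}(z) - q\,\mu_X(z)\bigr) \;\ge\; 0 \qquad \text{for every } z. \]
I would prove it by cases. If $z \in \Gamma_{Y^{dt}}$ then $g(z) = 1$, hence $\phi(z) - g(z) \le 0$, and by definition of $\Gamma_{Y^{dt}}$ we have $\mu_{Y^{dt}}(z) - q\,\mu_X(z) \le 0$; the product of two nonpositive numbers is nonnegative. If $z \notin \Gamma_{Y^{dt}}$ then $g(z) = 0$, hence $\phi(z) - g(z) = \phi(z) \ge 0$, and $\mu_{Y^{dt}}(z) - q\,\mu_X(z) > 0$; the product of two nonnegative numbers is nonnegative.

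Integrating this inequality over $(\mathcal{T}\times\mathbb{R}^d)^t$ and rearranging gives
\[ \int (\phi - g)\,\mu_{Y^{dt}} \;\ge\; q \int (\phi - g)\,\mu_X \;\ge\; 0, \]
where the last step uses $q \ge 0$ together with the hypothesis $\int (\phi - g)\,\mu_X \ge 0$. Since $\int \phi\,\mu_{Y^{dt}} - \int g\,\mu_{Y^{dt}} \ge 0$ is exactly $\mathbb{P}[h(Y^{dt}) = 1] \ge \mathbb{P}[Y^{dt} \in \Gamma_{Y^{dt}}]$, the lemma follows.

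I do not expect a genuine obstacle here: this is the classical result quoted verbatim, and the argument above is a two-line computation once the notation is set up. The only points needing a little care are bookkeeping rather than mathematics --- choosing a common dominating measure so that the densities $\mu_X$, $\mu_{Y^{dt}}$, the set $\Gamma_{Y^{dt}}$ (measurable since the likelihood ratio is), and all the integrals are well defined, and handling $h$ in its randomized form through $\phi$ (which the paper anticipates by allowing $h$ to be ``a deterministic or random $0/1$-function''). The substantive work of the paper lies not in this lemma but in the reductions that make it applicable, namely Lemma~\ref{lem:prob-det} (replacing a randomized adversary by a deterministic one) and the subsequent adaptive step Lemma~\ref{lem:adp-np}.
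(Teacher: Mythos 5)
Your proof is correct and is exactly the classical likelihood-ratio argument (the same one used for Lemma 3 in the appendix of \cite{cohen19}); the paper itself does not reprove this 1933 result but simply cites it, so your two-case pointwise inequality $(\phi-g)(\mu_{Y^{dt}}-q\mu_X)\ge 0$ followed by integration is precisely the argument the paper implicitly relies on. The measure-theoretic caveats you flag (a common dominating measure and the randomized-$h$ representation via $\phi$) are the right ones and are all that needs saying.
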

For an arbitrary element $h$ in the class of functions $H_p = \{ h \mid \mathbb{P}[h(X) = 1] \geq p \}$, construct the set $\Gamma_{Y^{dt}}$ for an appropriate value of $q$ for which $\mathbb{P} [X \in \Gamma_{Y^{dt}}] = p$.
Now, consider a function $h'$ which is 1 if its input comes from $\Gamma_{Y^{dt}}$ and 0 otherwise.
Then, the above lemma says that the function $h'$ has the minimum probability of being 1 under $Y^{dt}$, i.e.,
\[h' = \underset{h\in H_p}{\arg\!\min} \; \mathbb{P}[h(Y^{dt}) = 1].\]
This gives us the worst-case function that achieves the minimum probability under an adversarial distribution.
However, in the adaptive setting, $\Gamma_{Y^{dt}}$ could be a very complicated set and obtaining an expression for $\mathbb{P} [Y^{dt} \in \Gamma_{Y^{dt}}]$ might be difficult.
To simplify our analysis, we construct a {\it structured} deterministic adversary $\epsilon^{st}$ which exhausts its entire budget in the first coordinate of the first perturbation vector, i.e., $\epsilon^{st}_1 = (B, 0, \ldots, 0)$ and $\epsilon^{st}_i = (0, 0, \ldots, 0)$ for $i > 1$.
Let $Y^{st}$ be the corresponding random variable and $\mu_{Y^{st}}$ its density function.
We formulate the following adaptive version of the Neyman-Pearson lemma:
\begin{lemma}[{\bf Adaptive Neyman-Pearson Lemma}]
\label{lem:adp-np}
 If $\Gamma_{Y^{st}} = \{ z \in (\mathcal{T} \times \mathbb{R}^d)^t \mid \mu_{Y^{st}}(z) \leq q \mu_X(z)\}$ for some $q \geq 0$ and $\mathbb{P} [h(X) = 1] \geq \mathbb{P} [X \in \Gamma_{Y^{st}}]$, then $\mathbb{P} [h(Y^{dt}) = 1] \geq \mathbb{P} [Y^{st} \in \Gamma_{Y^{st}}]$.
\end{lemma}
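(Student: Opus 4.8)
The plan is to deduce Lemma~\ref{lem:adp-np} from the classical Neyman--Pearson lemma (Lemma~\ref{lem:neyman-pearson}) together with a \emph{convex-ordering} comparison between the likelihood ratios of the structured and of an arbitrary deterministic adversary. Write $\mu_X,\mu_{Y^{dt}},\mu_{Y^{st}}$ as densities with respect to the product over the $t$ rounds of (the base measure on $\mathcal{T}$) $\times$ (Lebesgue measure on $\mathbb{R}^d$). Since the conditional law of each token $\tau_i$ given $z_{<i}=(\tau_1,\eta_1,\dots,\tau_{i-1},\eta_{i-1})$ is the \emph{same} stochastic map under all three processes, the token factors cancel in every ratio and only the offset (Gaussian-shift) factors survive. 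With $\phi_\sigma$ the density of $\mathcal{N}(0,\sigma^2 I)$, this gives for $R^{dt}:=\mu_{Y^{dt}}/\mu_X$ and $R^{st}:=\mu_{Y^{st}}/\mu_X$
\begin{align*}
R^{dt}(z)&=\prod_{i=1}^{t}\frac{\phi_\sigma(\eta_i-\epsilon^{dt}_i)}{\phi_\sigma(\eta_i)}=\exp\!\Big(\sum_{i=1}^{t}\Big[\tfrac{\langle\eta_i,\epsilon^{dt}_i\rangle}{\sigma^2}-\tfrac{\|\epsilon^{dt}_i\|^2}{2\sigma^2}\Big]\Big),\\
R^{st}(z)&=\exp\!\Big(\tfrac{B\,(\eta_1)_1}{\sigma^2}-\tfrac{B^2}{2\sigma^2}\Big).
\end{align*}
Under $X$ one has $\mathbb{E}_X[R^{dt}]=\mathbb{E}_X[R^{st}]=1$, and $\log R^{st}\sim\mathcal{N}(-\tfrac{B^2}{2\sigma^2},\tfrac{B^2}{\sigma^2})$ because $(\eta_1)_1\sim\mathcal{N}(0,\sigma^2)$.

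\textbf{Step 1: reduce to a comparison of region masses.} Let $p_0:=\mathbb{P}[X\in\Gamma_{Y^{st}}]$, so $p_0\le\mathbb{P}[h(X)=1]$ by hypothesis. Choose a threshold $q'$ and (splitting the level set $\{R^{dt}=q'\}$ with independent coin flips, in the usual way, to absorb a possible atom of the law of $R^{dt}$) a region $\Gamma_{Y^{dt}}$ with $\{R^{dt}<q'\}\subseteq\Gamma_{Y^{dt}}\subseteq\{R^{dt}\le q'\}$ and $\mathbb{P}[X\in\Gamma_{Y^{dt}}]=p_0$. Since $p_0\le\mathbb{P}[h(X)=1]$, the (randomized form of the) Neyman--Pearson lemma applied to the pair $(X,Y^{dt})$ gives $\mathbb{P}[h(Y^{dt})=1]\ge\mathbb{P}[Y^{dt}\in\Gamma_{Y^{dt}}]$. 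Hence it suffices to prove $\mathbb{P}[Y^{dt}\in\Gamma_{Y^{dt}}]\ge\mathbb{P}[Y^{st}\in\Gamma_{Y^{st}}]$.

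\textbf{Steps 2--3: a Lorenz/convex-order inequality.} Each side above can be written as $\mathbb{E}_X[R\,\mathbf{1}\{R\le q\}]$; since both regions have $X$-mass $p_0$, the inequality is precisely $\int_0^{p_0}F_{R^{dt}}^{-1}(u)\,du\ge\int_0^{p_0}F_{R^{st}}^{-1}(u)\,du$, i.e.\ the (generalized) Lorenz curve of $R^{dt}$ lies above that of $R^{st}$. As both variables have mean $1$, this is exactly the statement $R^{dt}\preceq_{cx}R^{st}$ (equivalently, $\mathbb{E}_X[\psi(R^{dt})]\le\mathbb{E}_X[\psi(R^{st})]$ for every convex $\psi$; equivalently, a larger expected shortfall for $R^{st}$ at every level). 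I would prove $R^{dt}\preceq_{cx}R^{st}$ by an explicit martingale coupling on the probability space of $X$. Let $\mathcal{F}_k=\sigma(\tau_1,\eta_1,\dots,\tau_k,\eta_k)$, $u_i=\|\epsilon^{dt}_i\|^2/\sigma^2$, $S_k=\sum_{j\le k}\langle\eta_j,\epsilon^{dt}_j\rangle/\sigma^2$, $V_k=\sum_{j\le k}u_j$, and $c:=B^2/\sigma^2$, noting $V_t\le c$ holds \emph{pathwise} from $\|\epsilon\|_2\le B$. Conditionally on $\mathcal{F}_{i-1}$ and $\tau_i$ the increment $S_i-S_{i-1}$ is $\mathcal{N}(0,u_i)$ with $u_i$ already measurable, so $\exp(\mathrm{i}\theta S_k+\tfrac{\theta^2}{2}V_k)$ is an $\mathcal{F}_k$-martingale of mean $1$. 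Adjoin a padding variable with $\mathrm{pad}\mid\mathcal{F}_t\sim\mathcal{N}(0,c-V_t)$ and set $\widetilde R:=R^{dt}\exp(\mathrm{pad}-\tfrac{c-V_t}{2})$; then $\log\widetilde R=S_t-\tfrac c2+\mathrm{pad}$, and the martingale identity yields $\mathbb{E}[\exp(\mathrm{i}\theta\log\widetilde R)]=\exp(-\mathrm{i}\theta\tfrac c2-\tfrac{\theta^2}{2}c)$, so $\widetilde R\overset{d}{=}R^{st}$. Since $R^{dt}$ is $\mathcal{F}_t$-measurable and $\mathbb{E}[\exp(\mathrm{pad}-\tfrac{c-V_t}{2})\mid\mathcal{F}_t]=1$, we get $\mathbb{E}[\widetilde R\mid\mathcal{F}_t]=R^{dt}$, and conditional Jensen gives $\mathbb{E}_X[\psi(R^{dt})]\le\mathbb{E}_X[\psi(\widetilde R)]=\mathbb{E}_X[\psi(R^{st})]$ for all convex $\psi$, i.e.\ $R^{dt}\preceq_{cx}R^{st}$. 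Chaining Steps 1--3 completes the proof.

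\textbf{Main obstacle.} The delicate part is Step 3: choosing the filtration so that $\epsilon^{dt}_i$, which may react to the freshly revealed token $\tau_i$, is measurable \emph{before} the offset $\eta_i$ is sampled, and converting the pathwise budget bound $V_t\le c$ into the padding coupling that realizes $R^{st}$ on the nose (here it matters that $V_t$ is in general \emph{not} a stopping time, so one uses the explicit coupling above rather than optional stopping). The atom-handling in Step 1 and the precise equivalence between convex order, Lorenz order, and the region-mass inequality in Step 2 are standard but should be stated carefully, as is the verification that only the Gaussian-shift factors survive in the likelihood ratios.
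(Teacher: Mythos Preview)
Your argument is correct and takes a genuinely different route from the paper's proof.

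The paper proceeds by backward induction over time steps. It first passes to a full-budget deterministic adversary $\tilde\epsilon^{dt}$ (scaling the last perturbation so that the total norm is exactly $B$), and then, conditioning on the history $(\tau_1,\eta_1,\dots,\tau_j)$, observes that in the remaining Gaussian coordinates the Neyman--Pearson set $\{\mu_{\tilde Y^{dt}}/\mu_X\le q\}$ is a half-space whose $X$- and $\tilde Y^{dt}$-masses depend only on the squared remaining budget $(B_r^j)^2=\|\tilde\epsilon^{dt}_j\|^2+\|\tilde\epsilon^{dt}_{j+1}\|^2$, not on its direction. This lets the paper replace the last two nonzero perturbations by $(B_r^j,0,\dots,0)$ without changing either probability, and iterating collapses everything into the structured adversary. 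The output is in fact an \emph{equality} $\mathbb{P}[\tilde Y^{dt}\in\Gamma_{\tilde Y^{dt}}]=\mathbb{P}[Y^{st}\in\Gamma_{Y^{st}}]$ whenever the budget is saturated.

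Your approach is global and distributional rather than stepwise and geometric: after the same initial Neyman--Pearson reduction you recast the comparison as a Lorenz-curve inequality between the likelihood ratios and prove $R^{dt}\preceq_{cx}R^{st}$ by padding the log-likelihood martingale up to total quadratic variation $c=B^2/\sigma^2$. This is more compact and bypasses the separate full-budget lemma (the padding $\mathcal{N}(0,c-V_t)$ absorbs any slack directly), and it isolates the single structural fact behind the result: under $X$, the log of any deterministic adversary's likelihood ratio is a conditionally Gaussian martingale with predictable quadratic variation $\le c$, while $\log R^{st}\sim\mathcal{N}(-c/2,c)$ exactly. Your framework also recovers the paper's equality in the saturated case, since $V_t=c$ forces $\mathrm{pad}=0$ and hence $R^{dt}\overset{d}{=}R^{st}$. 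The paper's geometric argument, by contrast, stays close to the Cohen et al.\ half-space picture and may be easier to follow for readers coming from that literature; your convex-order argument would generalize more readily to other smoothing noises whose log-likelihood ratios admit a similar martingale structure.
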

The key difference from the classical version is that the worst-case set we construct in this lemma is for the structured adversary and the final inequality relates the probability of $h$ outputting 1 under the adaptive adversary to the probability that the structured adversary assigns to the worst-case set.
It says that for the appropriate value of $q$ for which $\mathbb{P} [X \in \Gamma_{Y^{st}}] = p$, any function $h \in H_p$ outputs 1 with at least the probability that $Y^{st}$ assigns to $\Gamma_{Y^{st}}$.
It shows that over all possible functions in $H_p$ and over all possible adversaries $\epsilon$, the indicator function $\mathbf{1}_{z \in \Gamma_{Y^{st}}}$ and the structured adversary capture the worst-case scenario where probability of $h$ being 1 under the adversarial distribution is the minimum.
Since both $Y^{st}$ and $X$ are just isometric Gaussian distribution with the same variance $\sigma^2$ centered at different points on the first coordinate of $\eta_1$, the set $\Gamma_{Y^{st}}$ is the set of all tuples $z$ for which $\{\eta_1\}_1$ is below a certain threshold.\footnote{We use $\{\eta_i\}_j$ to denote the $j$th coordinate of the vector $\eta_i$.}
We use lemmas~\ref{lem:prob-det} and~\ref{lem:adp-np} to derive the final bound on the probability of $h(Y)=1$ in the following theorem, the proof of which is deferred to the appendix.
\begin{theorem}[{\bf Robustness Guarantee}]
\label{thm:robustness}
For an isometric Gaussian smoothing noise with variance $\sigma^2$, if $\mathbb{P}[h(X) = 1] \geq p$, then:
\[\mathbb{P}[h(Y) = 1] \geq \Phi ( \Phi^{-1}(p) - B/\sigma),\]
where $\Phi$ is the standard normal CDF.
\end{theorem}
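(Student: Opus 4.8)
The plan is to chain the three preceding results in the order Lemma~\ref{lem:prob-det} $\to$ Lemma~\ref{lem:adp-np} $\to$ an explicit one-dimensional Gaussian computation. First I would reduce to a deterministic adversary: applying Lemma~\ref{lem:prob-det} with $\Gamma$ the acceptance region $\{z \mid h(z) = 1\}$ (treating a random $h$ by conditioning on its internal coins and arguing pointwise) produces a deterministic adversary $\epsilon^{dt}$ with
\[
\mathbb{P}[h(Y) = 1] \;\geq\; \mathbb{P}[h(Y^{dt}) = 1],
\]
so it suffices to lower-bound $\mathbb{P}[h(Y^{dt}) = 1]$ for this $\epsilon^{dt}$. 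Next I would invoke Lemma~\ref{lem:adp-np} with the structured adversary $\epsilon^{st}$ that spends its whole budget on $\{\eta_1\}_1$: picking $q \geq 0$ so that $\mathbb{P}[X \in \Gamma_{Y^{st}}] = p$, the hypothesis $\mathbb{P}[h(X)=1] \geq p$ becomes exactly $\mathbb{P}[h(X)=1] \geq \mathbb{P}[X \in \Gamma_{Y^{st}}]$, and the lemma yields $\mathbb{P}[h(Y^{dt})=1] \geq \mathbb{P}[Y^{st} \in \Gamma_{Y^{st}}]$. Everything then reduces to evaluating $\mathbb{P}[X \in \Gamma_{Y^{st}}]$ and $\mathbb{P}[Y^{st} \in \Gamma_{Y^{st}}]$.

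\textbf{Identifying $\Gamma_{Y^{st}}$.} The central observation is that $X$ and $Y^{st}$ are the \emph{same} law except for a deterministic shift of a single scalar. Writing $z = (\tau_1,\eta_1,\dots,\tau_t,\eta_t)$ and unrolling the interaction of systems $\mathbf{A}$ and $\mathbf{B}$: the law of $\tau_1$, the kernels that generate each $\tau_i$ from the realized past $(\tau_1,\eta_1,\dots,\tau_{i-1},\eta_{i-1})$, and the $\mathcal{N}(0,\sigma^2 I)$ draw for every $\delta_i$ are identical under $X$ and $Y^{st}$; the sole difference is that under $Y^{st}$ one has $\eta_1 = \delta_1 + (B,0,\dots,0)$ (and $\epsilon^{st}_i = 0$ for $i \geq 2$) whereas under $X$ one has $\eta_1 = \delta_1$. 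Hence the densities factor as $\mu_X(z) = g_\sigma(\eta_1)\,\Psi(z)$ and $\mu_{Y^{st}}(z) = g_\sigma(\eta_1 - B e_1)\,\Psi(z)$, with the \emph{same} factor $\Psi$ collecting $\tau_1$, all the $\tau_i$-kernels and $\delta_2,\dots,\delta_t$; here $g_\sigma$ is the $\mathcal{N}(0,\sigma^2 I)$ density and $e_1$ the first standard basis vector. Therefore the likelihood ratio
\[
\frac{\mu_{Y^{st}}(z)}{\mu_X(z)} \;=\; \frac{g_\sigma(\eta_1 - B e_1)}{g_\sigma(\eta_1)} \;=\; \exp\!\left( \frac{B\,\{\eta_1\}_1}{\sigma^2} - \frac{B^2}{2\sigma^2} \right)
\]
depends on $z$ only through $\{\eta_1\}_1$ and is strictly increasing in it, so $\Gamma_{Y^{st}} = \{ z \mid \mu_{Y^{st}}(z) \leq q\,\mu_X(z)\} = \{ z \mid \{\eta_1\}_1 \leq c \}$ for a threshold $c = c(q)$ that sweeps out all of $\mathbb{R}$ as $q$ ranges over $(0,\infty)$ (with $\Gamma_{Y^{st}} = \emptyset$ at $q = 0$).

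\textbf{Calibration and evaluation.} Under $X$ the scalar $\{\eta_1\}_1 \sim \mathcal{N}(0,\sigma^2)$, so $\mathbb{P}[X \in \Gamma_{Y^{st}}] = \Phi(c/\sigma)$; imposing $\Phi(c/\sigma) = p$ (possible by continuity for $p \in (0,1)$) forces $c = \sigma\,\Phi^{-1}(p)$. Under $Y^{st}$ the same scalar is $\mathcal{N}(B,\sigma^2)$, hence
\[
\mathbb{P}[Y^{st} \in \Gamma_{Y^{st}}] \;=\; \Phi\!\left( \frac{c - B}{\sigma} \right) \;=\; \Phi\!\left( \Phi^{-1}(p) - \frac{B}{\sigma} \right).
\]
Chaining the two inequalities from the first paragraph gives $\mathbb{P}[h(Y)=1] \geq \mathbb{P}[h(Y^{dt})=1] \geq \mathbb{P}[Y^{st}\in\Gamma_{Y^{st}}] = \Phi(\Phi^{-1}(p) - B/\sigma)$, which is the claim. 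The boundary cases are immediate: for $p=0$ the bound is $\Phi(-\infty)=0$, and for $p=1$ it is $1$, which holds because $Y \ll X$ (a conditionally bounded shift of a Gaussian) forces $h(Y)=1$ almost surely whenever $h(X)=1$ almost surely.

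\textbf{Main obstacle.} The delicate step is not the final Gaussian arithmetic but the measure-theoretic bookkeeping behind the factorization $\mu_X = g_\sigma\cdot\Psi$, $\mu_{Y^{st}} = g_\sigma(\cdot - Be_1)\cdot\Psi$: one must verify that the structured adversary's single-coordinate shift is genuinely the \emph{only} discrepancy between the joint laws of $z$ under $X$ and under $Y^{st}$, despite the fact that each downstream token $\tau_i$ depends on the realized offsets. The point is that conditioning on the realized trajectory makes the $\tau_i$-kernels literally the same functions in both processes, so they cancel in the ratio; writing this out carefully (and likewise for $\delta_2,\dots,\delta_t$) is the crux. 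Secondary care is needed for (i) the continuity/monotonicity argument selecting $q$ with $\mathbb{P}[X\in\Gamma_{Y^{st}}]=p$, (ii) reducing a random $h$ to the deterministic case by conditioning on its independent coins, and (iii) the $p\in\{0,1\}$ endpoints noted above.
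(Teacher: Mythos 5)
Your proof is correct and follows essentially the same route as the paper's: reduce to a deterministic adversary via Lemma~\ref{lem:prob-det}, invoke the adaptive Neyman--Pearson lemma (Lemma~\ref{lem:adp-np}) to pass to the structured adversary, and evaluate the resulting half-space $\{\eta_1\}_1 \leq \sigma\Phi^{-1}(p)$ under the shifted Gaussian. The only difference is cosmetic: the paper interleaves two extra applications of the classical Neyman--Pearson lemma (introducing intermediate sets $\Gamma_Y$ and $\Gamma_{Y^{dt}}$) before reaching $\Gamma_{Y^{st}}$, whereas you apply Lemma~\ref{lem:prob-det} directly to the event $\{z \mid h(z)=1\}$ and Lemma~\ref{lem:adp-np} directly to $h$, arriving at the same bound.
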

The above analysis can be adapted to obtain an upper-bound on $\mathbb{P}[h(Y) = 1]$ of $\Phi ( \Phi^{-1}(p) + B/\sigma)$.

\subsection{Policy Smoothing}
\label{sec:policy-smoothing}
Building on these results, we develop {\it policy smoothing}, a simple model-agnostic randomized-smoothing based technique that can provide certified robustness without increasing the computational complexity of the agent's policy. Given a policy $\pi$, we define a smoothed policy $\bar{\pi}$ as:
\[\bar{\pi}\left(\; \cdot \; \mid o(s_t) \right) = \pi \left(\; \cdot \; \mid o(s_t) + \delta_t\right), \; \text{where } \delta_t \sim \mathcal{N}(0, \sigma^2I).\]
Our goal is to certify the expected sum of the rewards collected over multiple time-steps under policy $\bar{\pi}$.
We modify the technique developed by \citet{Kumar0FG20} to certify the expected class scores of a neural network by using the empirical cumulative distribution function (CDF) of the scores under the smoothing distribution to work for the RL setting.
This approach utilizes the fact that the expected value of a random variable $\mathcal{X}$ representing a class score under a Gaussian $\mathcal{N}(0, \sigma^2 I)$ smoothing noise can be expressed using its CDF $F(.)$ as below:

\begin{equation}
\label{eq:expectation_int}
\mathbb{E}[\mathcal{X}] = \int_0^\infty (1 - F(x)) dx - \int_{-\infty}^0 F(x) dx.
\end{equation}

Given $m$ samples $\{x_i\}_{i=1}^m$ of the random variable $\mathcal{X}$, let us define its empirical CDF at a point $x$, $F_m(x) = |\{ x_i \mid x_i \leq x\}|/m$, as the fraction of samples that are less than or equal to $x$.
Using $F_m(x)$, the Dvoretzky–Kiefer–Wolfowitz inequality can produce high-confidence bounds on the true CDF of $\mathcal{X}$.
It says that with probability $1 - \alpha$, for $\alpha \in (0, 1]$, the true CDF $F(x)$ is in the range $[\underline{F}(x), \overline{F}(x)]$, where $\underline{F}(x) = F_m(x) - \sqrt{\ln (2/\alpha) / 2m}$ and $\overline{F}(x) = F_m(x) + \sqrt{\ln (2/\alpha) / 2m}$.
For an adversarial perturbation of $\ell_2$-size $B$, the result of~\cite{cohen19} bounds the CDF  within $[\Phi(\Phi^{-1}(\underline{F}(x)) - B/\sigma), \Phi(\Phi^{-1}(\overline{F}(x)) + B/\sigma)]$, which in turn bounds $E[\mathcal{X}]$ using equation~(\ref{eq:expectation_int}).

In the RL setting, we can model the total reward as a random variable and obtain its empirical CDF by playing the game using policy $\bar{\pi}$.
As above, we can bound the CDF $F(x)$ of the total reward in a range $[\underline{F}(x), \overline{F}(x)]$ using the empirical CDF.
Applying Theorem~\ref{thm:robustness}, we can bound the CDF within $[\Phi(\Phi^{-1}(\underline{F}(x)) - B/\sigma), \Phi(\Phi^{-1}(\overline{F}(x)) + B/\sigma)]$ for an $\ell_2$ adversary of size $B$.
The function $h$ in Theorem~\ref{thm:robustness} could represent the CDF $F(x)$ by indicating whether the total reward computed for an input $z \in (\mathcal{T} \times \mathbb{R}^d)^t$ is below a value $x$.
Finally, equation~(\ref{eq:expectation_int}) puts bounds on the expected total reward under an adversarial attack.

\section{Experiments}
\begin{figure}[t]
\vspace{-1cm}
    \centering
    \includegraphics[width=\textwidth]{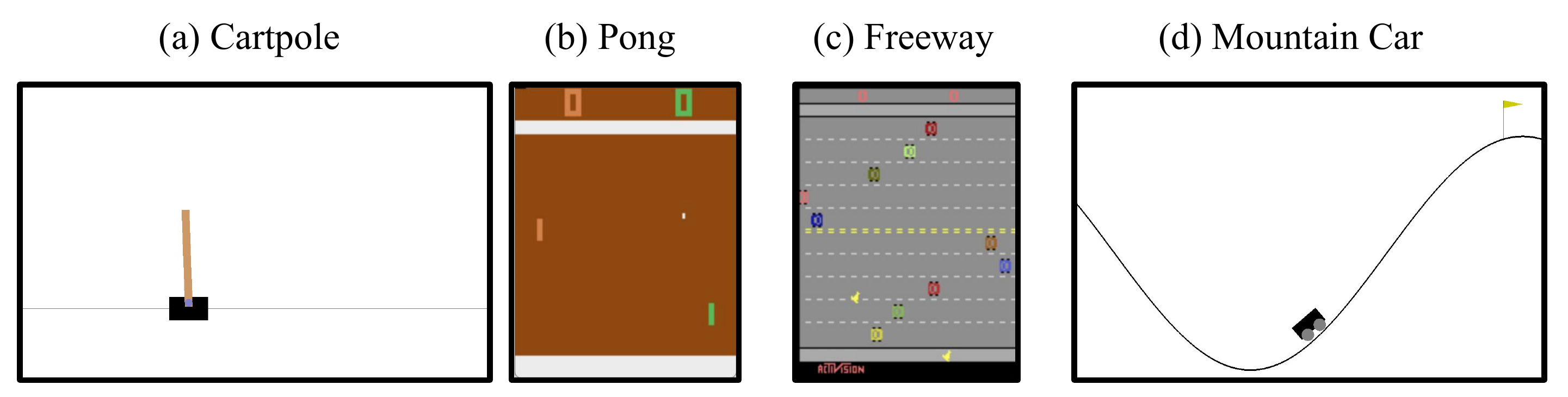}
    \caption{Environments used in evaluations rendered by OpenAI Gym \citep{brockman2016openai}.}
    \label{fig:game_pix}
    \vspace{-4mm}
\end{figure}

\subsection{Environments and Setup}
We tested on four standard environments: the classical cortrol problems `Cartpole' and `Mountain Car' and the Atari games `Pong' and `Freeway.'
We consider three tasks which use a discrete action space (`Cartpole' and the two Atari games) as well as one task that  uses a continuous action space (`Mountain Car'). For the discrete action space tasks, we use a standard Deep Q-Network (DQN) \citep{MnihKSGAWR13} model, while for `Mountain Car', we use Deep Deterministic Policy Gradient (DDPG) \citep{Lillicrap2016ContinuousCW}.

As is common in DQN and DDPG, our agents choose actions based on multiple frames of observations. In order to apply a realistic threat model, we assume that the adversary acts on each frame \textit{only once} when it is first observed. The adversarial distortion is then maintained when the same frame is used in future time-steps. In other words, we consider the observation at time step $o_t$ (discussed in Section \ref{sec:notation}) to be only the \textit{new} observation at time $t$: this means that the adversarial/noise perturbation $\eta_t$, as a \textit{fixed} vector, continues to be used to select the next action for several subsequent time-steps. This is a realistic model because we are assuming that the adversary can affect the agent's observation of states, not necessarily the agent's \textit{memory} of previous observations. 
As in other works on smoothing-based defenses (e.g., \cite{cohen19}), we add noise during training as well as at test time.  We use DQN and DDPG implementations from the popular stable-baselines3 package \citep{stable-baselines3}: hyperparameters are provided in the appendix. In experiments, we report and certify for the total \textit{non-discounted} $(\gamma =1)$ reward.

In `Cartpole', the observation vector consists of four kinematic features. We use a simple MLP model for the Q-network, and tested two variations: one in which the agent uses five frames of observation, and one in which the agent uses only a single frame (shown in the appendix).  

In order to show the effectiveness of our technique on tasks involving high-dimensional state observations, we chose two tasks (`Pong' and `Freeway') from the Atari environment, where state observations are image frames, observed as 84 $\times$ 84 pixel greyscale images.
For `Pong', we test on a ``one-round'' variant of the original environment.  In our variant, the game ends after one player, either the agent or the opponent, scores a goal: the  reward is then  either zero or one. Note that this is \textit{not} a one-timestep episode: it takes typically on the order of 100 timesteps for this to occur. Results for a full Pong game are presented in the appendix: as explained there, we find that the certificates unfortunately do not scale with the length of the game. For the `Freeway' game, we play on `Hard' mode and end the game after 250 timesteps.

In order to test on an environment with a {\it continuous} action space, we chose the `Mountain Car' environment. Note that previous certification results for reinforcement learning, which certify actions at individual states rather than certifying the overall reward \citep{Zhang-NeurIPS2020} {\it cannot} be applied to continuous action state problems. In this environment, the observation vector consists of two kinematic features (position and velocity), and the action is one continuous scalar (acceleration). As in `Cartpole', we use five observation frames and a simple MLP policy. We use a slight variant of the original environment: we do not penalize for fuel cost so the reward is a boolean representing whether or not the car reaches the destination in the time allotted (999 steps).

\begin{figure}[t]
\vspace{-1cm}
    \centering
    \includegraphics[width=0.9\textwidth]{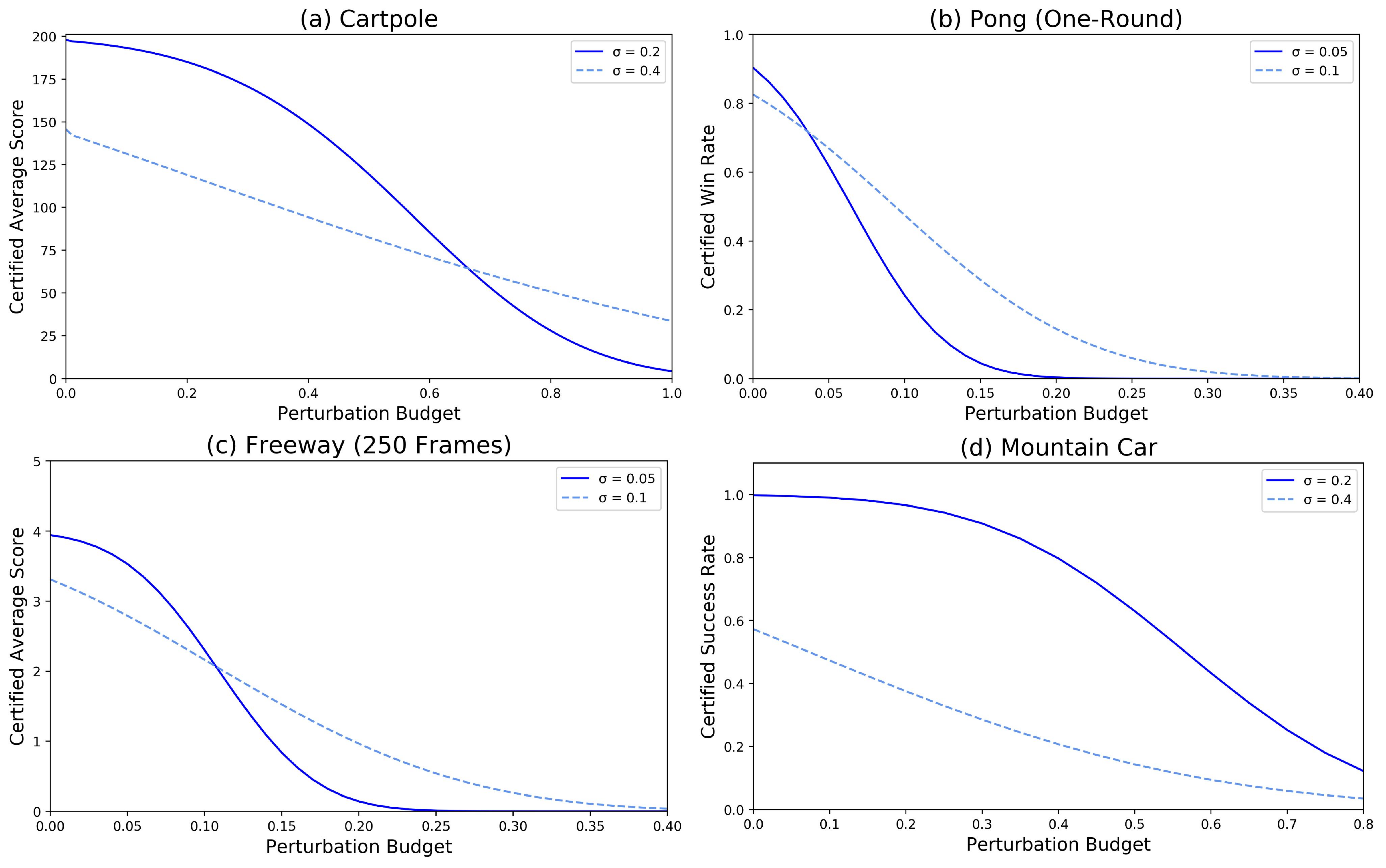}
    \vspace{-2mm}
    \caption{ Certified performance for various environments. The certified lower-bound on the mean reward is based on a 95\% lower confidence interval estimate of the mean reward of the smoothed model, using 10,000 episodes.}
    \label{fig:cert_results}
    \vspace{-6mm}
\end{figure}

\subsection{Results}
Certified lower bounds on the expected total reward, as a function of the total perturbation budget, are presented in Figure \ref{fig:cert_results}. For tasks with zero-one total reward (`Pong' and `Mountain Car'), the function to be smoothed represents the total reward: $h(\cdot) = R$ where $R$ is equal to $1$ if the agent wins the round, and $0$ otherwise. To compute certificates on games with continuous scores (`Cartpole' and `Freeway'), we use CDF smoothing \citep{Kumar0FG20}: see appendix for technical details.

In order to evaluate the robustness of both undefended and policy-smoothed agents, we developed an attack tailored to the threat model defined in \ref{eq:threat_model}, where the adversary makes a perturbation to state observations which is \textit{bounded over the entire episode}. For DQN agents, as in \cite{LutjensEH19}, we perturb the observation $o$ such that the perturbation-induced action $a' := \arg\max_{a} Q(o+\epsilon_t, a)$ minimizes the (network-approximated) Q-value of the true observation $Q(o, a')$. However, in order to conserve adversarial budget, we \textit{only} attack if the gap between attacked q-value $Q(o, a')$ and the clean q-value $\max_{a} Q(o, a)$ is sufficiently large, exceeding a preset threshold $\lambda_Q$. In practice, this allows the attacker to concentrate the attack budget only on the time-steps which are critical to the agent's performance. When attacking DDPG, where both a Q-value network and a policy network $\pi$ are trained and the action is taken according to $\pi$, we instead minimize
   $ Q(o, \pi(o+\epsilon_t)) + \lambda \|\epsilon_t\|^2$
where the hyperparameter $\lambda$ plays an analogous role in focusing perturbation budget on ``important'' steps, as judged by the effect on the approximated Q-value.
Empirical results are presented in Figure~\ref{fig:emp_results}. We see that the attacks are effective on the undefended agents (red, dashed lines). In fact, from comparing Figures~\ref{fig:cert_results} and~\ref{fig:emp_results}, we see that, for the Pong and Cartpole environments, the undefended performance under attack is worse than the \textit{certified lower bound} on the performance of the policy-smoothed agents under \textit{any possible} attack: our certificates are the clearly non-vacuous for these environments. Further details on the attack optimizations are provided in the appendix. 

We also present an attempted empirical attack on the smoothed agent, adapting techniques for attacking smoothed classifiers from \cite{SalmanLRZZBY19} (solid blue lines). We observed that our model was highly robust to this attack -- significantly more robust than guaranteed by our certificate.  However, it is not clear whether this is due to looseness in the certificate or to weakness of the attack: the significant practical challenges to attacking smoothed agents are also discussed in the appendix. 

\begin{figure}[t]
\vspace{-1cm}
    \centering
    \includegraphics[width=0.9\textwidth]{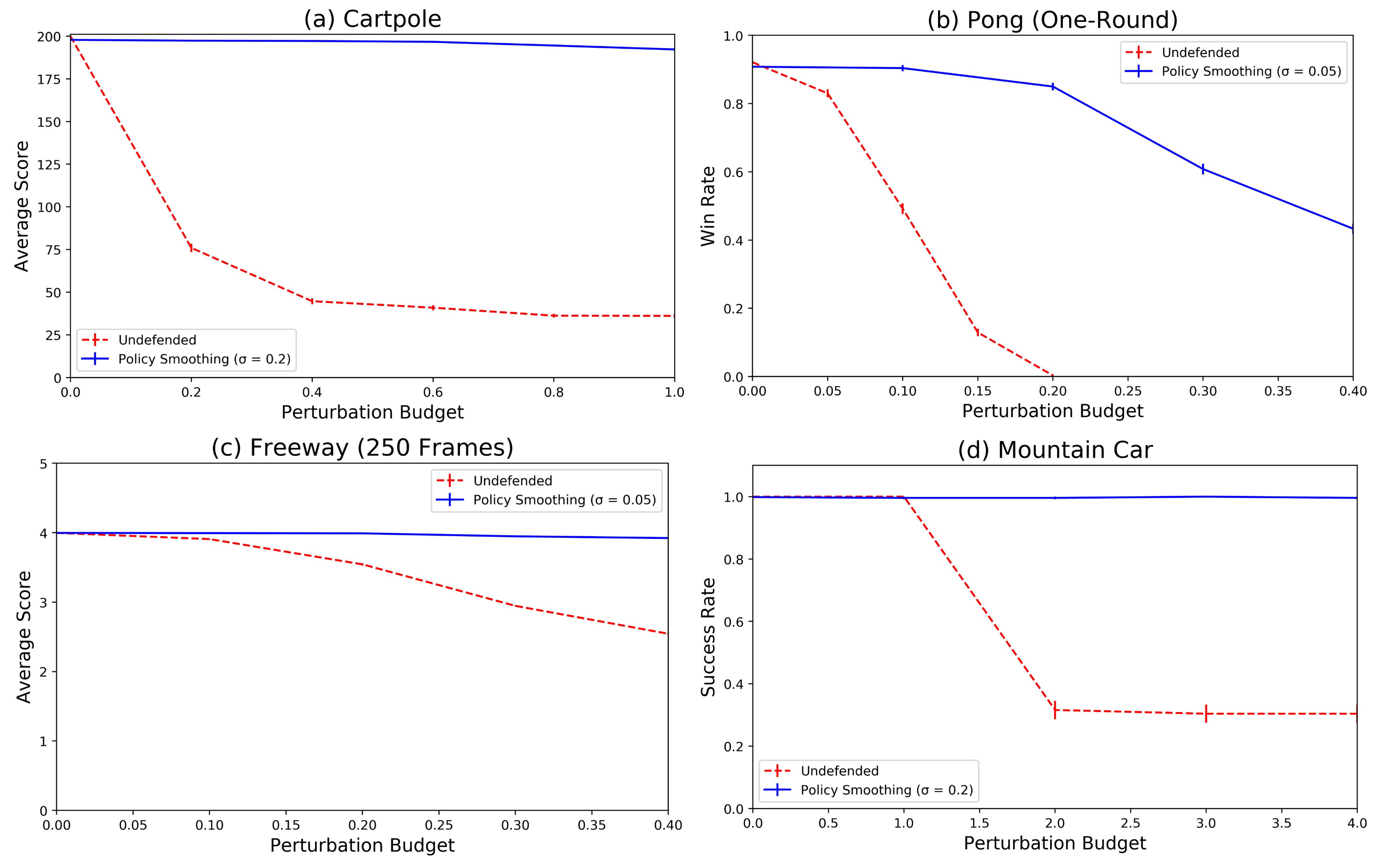}
    \vspace{-2mm}
    \caption{Empirical robustness of defended and undefended agents. Full details of attacks are presented in appendix. }
    \label{fig:emp_results}
    \vspace{-5mm}
\end{figure}

\vspace{-1mm}
\section{Conclusion}
\vspace{-2mm}

In this work, we extend randomized smoothing to design a procedure that can make any reinforcement learning agent provably robust against adversarial attacks without significantly increasing the complexity of the agent's policy. We show how to adapt existing theory on randomized smoothing from static tasks such as classification, to the dynamic setting of RL. By proving an adaptive version of the celebrated Neyman-Pearson Lemma, we show that by adding Gaussian smoothing noise to the input of the policy, one can certifiably defend it against norm-bounded adversarial perturbations of its input. The policy smoothing technique and its theory covers a wide range of adversaries, policies and environments. Our analysis is tight, meaning that the certificates we achieve are best possible unless restrictive assumptions about the RL game are made. In our experiments, we show that our method provides meaningful guarantees on the robustness of the defended policies and the total reward they achieve even in the worst case is higher than an undefended policy. In the future, the introduction of randomized smoothing to RL could inspire the design of provable robustness techniques for control problems in dynamic real-world environments and multi-agent RL settings.

\vspace{-1mm}
\section*{Reproducibility}
\vspace{-2mm}
We supplement our work with accompanying code for reproducing the experimental results, as well as pre-trained models for a selection of the experiments.
Details about setting hyper-parameters and the environments we test are included in the appendix.
Proofs for our theoretical results (lemmas and main theorem) are also provided in the appendix.


\vspace{-1mm}
\section*{Ethics Statement}
\vspace{-2mm}
We present a method to make RL models provably robust under adversarial perturbations.
We do not foresee any immediate ethical concerns associated with our work.

\vspace{-1mm}
\section*{Acknowledgements}
\vspace{-3mm}
This project was supported in part by NSF CAREER AWARD 1942230, a grant from NIST 60NANB20D134, HR001119S0026-GARD-FP-052, HR00112090132, ONR YIP award N00014-22-1-2271, Army Grant W911NF2120076.

\bibliography{references}
\bibliographystyle{iclr2022_conference}

\newpage
\appendix

\section{Tightness of the Certificate}\label{sec:tightness}
Here, we present a worst-case environment-policy pair that achieves the bound in Theorem~\ref{thm:robustness}, showing that our robustness certificate is in fact tight.
For a given environment $M = (S, A, T, R, \gamma)$ and a policy $\pi$, let $p$ be a lower-bound on the probability that the total reward obtained by policy $\pi$ under Gaussian smoothing (no adversary) with variance $\sigma^2$ is above a certain threshold $\nu$, i.e.,
\[\mathbb{P} \left[ \sum_{i=1}^t \gamma^{i-1}R_i \geq \nu \right] \geq p. \]
Let $H_p$ be the class of all such environment-policy pairs that cross this reward threshold with probability at least $p$.
We construct an environment-policy pair $(M', \pi')$ that achieves the reward threshold $\nu$ with probability $\Phi ( \Phi^{-1}(p) - B/\sigma)$ under the structured adversary $\epsilon^{st}$.
Note that, this does not mean that $\epsilon^{st}$ is the strongest possible adversary for a general environment-policy pair.
It only shows that the performance of policy $\pi'$ in environment $M'$ under the adversary $\epsilon^{st}$ is a lower-bound on the performance of a general environment-policy pair under a general adversary. 
Consider a one-step game with environment $M' = (S, A, T', R', \gamma)$ with a deterministic observation function $o$ of the state-space and a policy $\pi'$ such that $\pi'$ returns an action $a_1 \in A$ if the first coordinate of $o(s_1) + \eta_1$ is at most $\omega = \{o(s_1)\}_1 +  \sigma \Phi^{-1}(p)$ and another action $a_2 \in A$ otherwise.
Here $\{o(s_1)\}_1$ represents the first coordinate of $o(s_1)$.
The environment offers a reward $\nu$ if the action in the first step is $a_1$ and 0 when it is $a_2$.
The game terminates immediately.
The probability of the reward being above $\nu$ is equal to the probability of the action being $a_1$.
When $\eta_1$ is sampled from the Gaussian distribution, this probability is equal to $\Phi((\omega - \{o(s_1)\}_1)/\sigma) = p$.
Therefore, $(M', \pi') \in H_p$.
Under the presence of the structured adversary $\epsilon^{st}$ defined in Section~\ref{sec:adaptive-NP}, this probability after smoothing becomes $\Phi((\omega - \{o(s_1)\}_1 - B)/\sigma) = \Phi ( \Phi^{-1}(p) - B/\sigma)$, which is same as the bound in Theorem~\ref{thm:robustness}.

\section{Static Vs. Adaptive Setting}
\label{sec:static-adaptive}
In this section, we illustrate the difference between the adversarial distributions in the static setting and the adaptive setting.
Naively, one might assume that smoothing-based robustness guarantees can be applied directly to reinforcement learning, by adding noise to observations. For example, it seems plausible to use Cohen et al.'s $\ell_2$ certificate \cite{cohen19}, which relies on the overlap in the distributions of isometric Gaussians with different means, by simply adding Gaussian noise to each observation (Figure \ref{fig:intro_figure}-a).
However, as we demonstrate with a toy example in Figure \ref{fig:intro_figure}-(b-d), the Cohen et al. certificate \textit{cannot} be applied directly to the RL setting, because adding noise to sequential observations \textit{does not} result in an isometric Gaussian distribution over the space of observations. This is because the adversarial offset to later observations may be conditioned on the noise added to previous observations. In \ref{fig:intro_figure}-(b-d), we consider a two-step episode, and for simplicity, we consider a case where the ground-truth observations at each step are fixed. At step 1, the noised distributions of the clean observation $o_1$ and the adversarially-perturbed observation $o_1'$ are both Gaussians and overlap substantially, similar to in the standard classification setting (panel b). However, we see in panel (c) that the adversarial perturbation $\epsilon_2$ added to $o_2$ \textit{can depend on} the smoothed value of $o_1'$. This is because the agent may leak information about the observation that it receives after smoothing ($o_1+ \eta_1$)  to the adversary, for example through its choice of actions. After smoothing is performed on $o_2$, the adaptive nature of the adversary causes the distribution of smoothed observations to no longer be an isometric Gaussian in the adversarial case (panel d). The standard certification results therefore cannot be applied.

\section{Neyman–Pearson lemma [1993] in Smoothing}
\label{sec:neyman-pearson-smoothing}
In the context of randomized smoothing, the Neyman–Pearson lemma produces the worst-case decision boundary of a classifier based on the estimated probability of the top class under the smoothing distribution.
It says that this boundary is a region where the ratio of the probability density functions of the smoothing distributions at the clean input and the perturbed input is a constant.
When the two distributions are isometric Gaussians, as is the case in static settings like image classification, this boundary takes the form of a hyper-plane (see Appendix A of \cite{cohen19}).
However, in the dynamic setting of RL, the smoothing distribution after adding the adversarial perturbation may not be isometric even if the smoothing noise at each time-step was sampled from an isometric Gaussian distribution (see figure 1, section `Technical contributions' and Appendix A).
So, we formulate and prove an adaptive version of the Neyman-Pearson lemma to obtain provable robustness in RL through randomized smoothing.

\section{Proof of Lemma \ref{lem:prob-det}}
\label{sec:proof-lem-prob-det}

\begin{statement}
For any general adversary $\epsilon$ and an $\Gamma \subseteq (\mathcal{T} \times \mathbb{R}^d)^t$, there exists a deterministic adversary $\epsilon^{dt}$ such that,
\[\mathbb{P}[Y^{dt} \in \Gamma] \leq \mathbb{P}[Y \in \Gamma], \]
where $Y^{dt}$ is the random variable for the distribution defined by the adversary $\epsilon^{dt}$.
\end{statement}

\begin{proof}
Consider a time-step $j$ such that $\forall i < j, \epsilon_i$ is a deterministic function of $\tau_1, \eta_1, \tau_2, \allowbreak \eta_2, \ldots, \tau_{i-1}, \eta_{i-1}, \tau_i$.
Let $\mathcal{H} = \{z \mid z_1 = \tau_1, z_2 = \eta_1, z_3 = \tau_2, z_4 = \eta_2, \ldots, z_{2j-1} = a_j\}$ be the set of points whose first $2j-1$ coordinates are fixed to an arbitrary set of values $\tau_1, \eta_1, \tau_2, \eta_2, \ldots, \tau_j$.
In the space defined by $\mathcal{H}$, $\epsilon_1, \ldots, \epsilon_{j-1}$ are fixed vectors in $\mathbb{R}^d$ and $\epsilon_j$ is sampled from a fixed distribution over the vectors with $\ell_2$-norm at most $B_r^j$.
Let $Y_{\mathcal{H}}^\gamma$ be the random variable representing the distribution over points in $\mathcal{H}$ defined by the adversary for which $\epsilon_j = \gamma$, such that $\|\gamma\|_2 \leq B_r^j$.
Define an adversary $\epsilon'$, such that, $\epsilon'_i = \epsilon_i, \forall i \neq j$. Set $\epsilon'_j$ to the vector $\gamma$ that minimizes the probability that $Y_{\mathcal{H}}^\gamma$ assigns to $\Gamma \cap \mathcal{H}$, i.e.,
\[\epsilon'_j = \underset{\|\gamma\|_2 \leq B_r^j}{\arg\min} \; \mathbb{P}[Y_{\mathcal{H}}^\gamma \in \Gamma \cap \mathcal{H}]\]
The adversary $\epsilon'$ behaves as $\epsilon$ up to step $j-1$. At step $j$, it sets $\epsilon'_j$ to the $\gamma$ that minimizes the probability it assigns to $\Gamma \cap \mathcal{H}$, based on the values $\tau_1, \eta_1, \tau_2, \eta_2, \ldots, \tau_j$.
After that, it mimics $\epsilon$ till the last time-step $t$.
Therefore, for a given tuple $(z_1, z_2, \ldots, z_{2j-1}) = (\tau_1, \eta_1, \tau_2, \eta_2, \ldots, \tau_j)$,
\[\mathbb{P}[Y_{\mathcal{H}}^{\epsilon'_j} \in \Gamma \cap \mathcal{H}] \leq \mathbb{P}[Y \in \Gamma \cap \mathcal{H}]\]
Since both adversaries are same up to step $j-1$, their respective distributions over $z_1, z_2, \ldots, z_{2j-1}$ remains same as well.
Therefore, integrating both sides of the above inequality over the space of all tuples $(z_1, z_2, \ldots, z_{2j-1})$, we have:
\begin{align*}
    \int \mathbb{P}[Y_{\mathcal{H}}^{\epsilon'_j} &\in \Gamma \cap \mathcal{H}] p_Y(z_1, z_2, \ldots, z_{2j-1}) dz_1 dz_2 \ldots dz_{2j-1}\\
    &\leq \int \mathbb{P}[Y \in \Gamma \cap \mathcal{H}] p_Y(z_1, z_2, \ldots, z_{2j-1}) dz_1 dz_2 \ldots dz_{2j-1}\\
    &\implies \mathbb{P}[Y' \in \Gamma] \leq \mathbb{P}[Y \in \Gamma],
\end{align*}
where $Y'$ is the random variable corresponding to $\epsilon'$.
Thus, we have constructed an adversary where the first $j$ adversarial perturbations are a deterministic function of the $\tau_i$s and $\eta_i$s of the previous rounds.
Applying the above step sufficiently many times we can construct a deterministic adversary $\epsilon^{dt}$ represented by the random variable $Y^{dt}$ such that
\[\mathbb{P}[Y^{dt} \in \Gamma] \leq \mathbb{P}[Y \in \Gamma].\]
\end{proof}

\section{Proof of Lemma~\ref{lem:adp-np}}
\label{sec:proof-lem-adp-np}

Lemma~\ref{lem:adp-np} states that the structured adversary characterises the worst-case scenario.
Before proving this lemma, let us first show that any deterministic adversary can be converted to one that uses up the entire budget of $B$ without increasing the probability it assigns to $h$ being one in the worst-case.
For each step $i$, let us define a \emph{used budget} $B_u^i = \|(\epsilon_1, \epsilon_2, \ldots, \epsilon_{i-1})\|_2$ as the norm of the perturbations of the previous steps and a \emph{remaining budget} $B_r^i = \sqrt{B^2 - (B_u^i)^2}$ as an upper-bound on the norm of the perturbations of the remaining steps.
Note that, $B_u^1 = 0$ and $B_r^1 = B$.

Consider a version $\tilde{\epsilon}^{dt}$
of the deterministic adversary that uses up the entire available budget $B$ by scaling up $\epsilon_t^{dt}$ such that its norm is equal to $B_r^t$, i.e., setting it to $\epsilon_t^{dt} B_r^t/\|\epsilon_t^{dt}\|_2$.
Let $\tilde{Y}^{dt}$ be the random variable representing $\tilde{\epsilon}^{dt}$.

\begin{lemma*}
If $\Gamma_{\tilde{Y}^{dt}} = \{z \in (\mathcal{T} \times \mathbb{R}^d)^t \mid \mu_{\tilde{Y}^{dt}}(z) \leq q \mu_X(z)\}$ for some $q \geq 0$ and $\mathbb{P} [h(X) = 1] \geq \mathbb{P} [X \in \Gamma_{\tilde{Y}^{dt}}]$, then $\mathbb{P} [h(Y^{dt}) = 1] \geq \mathbb{P} [\tilde{Y}^{dt} \in \Gamma_{\tilde{Y}^{dt}}]$.
\end{lemma*}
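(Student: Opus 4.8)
The plan is to derive the statement from the classical Neyman--Pearson lemma (Lemma~\ref{lem:neyman-pearson}) applied to the pair $(X,\tilde Y^{dt})$, after replacing the arbitrary function $h$ by a tailored ``worst-case'' function with the same clean probability. Set $p:=\mathbb{P}[X\in\Gamma_{\tilde Y^{dt}}]$, so the hypothesis reads $\mathbb{P}[h(X)=1]\ge p$. It suffices to produce a $0/1$-function $\tilde h$ with (i) $\mathbb{P}[\tilde h(X)=1]\ge p$ and (ii) $\mathbb{P}[\tilde h(\tilde Y^{dt})=1]\le\mathbb{P}[h(Y^{dt})=1]$: by (i) and Lemma~\ref{lem:neyman-pearson} for $(X,\tilde Y^{dt})$ we obtain $\mathbb{P}[\tilde h(\tilde Y^{dt})=1]\ge\mathbb{P}[\tilde Y^{dt}\in\Gamma_{\tilde Y^{dt}}]$, and composing with (ii) gives $\mathbb{P}[h(Y^{dt})=1]\ge\mathbb{P}[\tilde Y^{dt}\in\Gamma_{\tilde Y^{dt}}]$. (We may take $h$ deterministic; the general case follows by conditioning on its internal randomness.)

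To build $\tilde h$ I would condition on the history $\mathcal{F}=(\tau_1,\eta_1,\dots,\tau_{t-1},\eta_{t-1},\tau_t)$, i.e.\ on everything except the final offset $\eta_t$. Since $\tilde\epsilon^{dt}$ and $\epsilon^{dt}$ agree on the first $t-1$ steps, the marginal law of $\mathcal{F}$ is identical under $Y^{dt}$ and $\tilde Y^{dt}$ --- call it $\rho$ --- whereas under $X$ it is some other law $\rho_X$. Both adversaries being deterministic, once $\mathcal{F}$ is fixed the only remaining randomness is $\delta_t\sim\mathcal{N}(0,\sigma^2 I)$; writing $e:=\epsilon_t^{dt}(\mathcal{F})$, with $\|e\|\le B_r^t(\mathcal{F})$, and $\tilde e:=\tilde\epsilon_t^{dt}(\mathcal{F})=e\,B_r^t(\mathcal{F})/\|e\|$, which has the direction of $e$ and norm exactly $B_r^t(\mathcal{F})$, the conditional law of $\eta_t$ is $\mathcal{N}(0,\sigma^2 I)$ under $X$, $\mathcal{N}(e,\sigma^2 I)$ under $Y^{dt}$, and $\mathcal{N}(\tilde e,\sigma^2 I)$ under $\tilde Y^{dt}$. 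Thus, conditioned on $\mathcal{F}$, we are exactly in the static isometric-Gaussian regime of \cite{cohen19}, with $h$ restricted to the function $h_{\mathcal{F}}(\eta_t):=h(\mathcal{F},\eta_t)$ of $\eta_t$ alone. Now let $\tilde h_{\mathcal{F}}$ be the halfspace $\{\eta_t:\langle\eta_t,\,e/\|e\|\rangle\le\sigma\,\Phi^{-1}(p_{\mathcal{F}})\}$, where $p_{\mathcal{F}}:=\mathbb{P}_{\eta_t\sim\mathcal{N}(0,\sigma^2 I)}[h_{\mathcal{F}}(\eta_t)=1]$ (take any unit direction when $e=0$); its conditional clean mass equals $p_{\mathcal{F}}$, and since the conditional law of $\eta_t$ under $X$ does not depend on $\mathcal{F}$ we get $\mathbb{P}[\tilde h(X)=1]=\int\rho_X(\mathcal{F})\,p_{\mathcal{F}}\,d\mathcal{F}=\mathbb{P}[h(X)=1]\ge p$, which is (i).

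For (ii), the conditional $\tilde Y^{dt}$-mass of $\tilde h_{\mathcal{F}}$ is $\Phi\!\big(\Phi^{-1}(p_{\mathcal{F}})-\|\tilde e\|/\sigma\big)=\Phi\!\big(\Phi^{-1}(p_{\mathcal{F}})-B_r^t(\mathcal{F})/\sigma\big)$, whereas by the static Gaussian Neyman--Pearson bound of \cite{cohen19} the conditional $Y^{dt}$-mass of $h_{\mathcal{F}}$ is at least $\Phi\!\big(\Phi^{-1}(p_{\mathcal{F}})-\|e\|/\sigma\big)$. As $\|e\|\le B_r^t(\mathcal{F})$ and $s\mapsto\Phi(\Phi^{-1}(p_{\mathcal{F}})-s)$ is nonincreasing, the former is at most the latter, \emph{pointwise in $\mathcal{F}$}; integrating against the common marginal $\rho$ gives $\mathbb{P}[\tilde h(\tilde Y^{dt})=1]\le\mathbb{P}[h(Y^{dt})=1]$, which is (ii), and the chain in the first paragraph concludes.

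The step I expect to be the main obstacle is recognising that the ``reassembly'' over histories $\mathcal{F}$ must be routed through the classical lemma for $(X,\tilde Y^{dt})$ rather than attempted directly: one cannot simply compare the Neyman--Pearson sets of $(X,Y^{dt})$ and $(X,\tilde Y^{dt})$ slice by slice, because the clean marginal $\rho_X$ of $\mathcal{F}$ differs from the adversarial marginal $\rho$, so the conditional masses do not dominate pointwise. The construction of $\tilde h$ --- replacing $h$, history by history, with the Cohen worst-case halfspace carrying the matching conditional clean probability $p_{\mathcal{F}}$ --- is exactly what restores a pointwise comparison (making the integration immediate) while keeping $\tilde h$ admissible for Lemma~\ref{lem:neyman-pearson}. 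A minor point to dispatch is the degenerate case $e=\epsilon_t^{dt}(\mathcal{F})=0$ (with $\tilde e$ then taken along an arbitrary direction of norm $B_r^t(\mathcal{F})$): there the conditional $Y^{dt}$-mass of $h_{\mathcal{F}}$ is $p_{\mathcal{F}}\ge\Phi(\Phi^{-1}(p_{\mathcal{F}})-B_r^t(\mathcal{F})/\sigma)$, so the inequalities above still hold.
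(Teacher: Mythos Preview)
Your argument is correct, and it arrives at the conclusion by a genuinely different intermediate construction than the paper's. Both proofs condition on the history $\mathcal{F}=(\tau_1,\eta_1,\dots,\tau_t)$, exploit that on each slice the only remaining randomness is the Gaussian $\eta_t$, and finish with the classical Neyman--Pearson lemma applied to the pair $(X,\tilde Y^{dt})$; the difference lies in what object is sandwiched in the middle. The paper first applies the \emph{global} Neyman--Pearson lemma to $(X,Y^{dt})$, producing the set $\Gamma_{Y^{dt}}$ whose slice on each $\mathcal{H}$ is a half-space orthogonal to $\epsilon_t^{dt}$ with threshold dictated by the single global parameter $q'$; it then argues geometrically that replacing $\epsilon_t^{dt}$ by the longer, collinear $\tilde\epsilon_t^{dt}$ only pushes the Gaussian center further from that half-space, yielding $\mathbb{P}[Y^{dt}\in\Gamma_{Y^{dt}}]\ge\mathbb{P}[\tilde Y^{dt}\in\Gamma_{Y^{dt}}]$, and only then invokes Neyman--Pearson for $(X,\tilde Y^{dt})$ on the indicator of $\Gamma_{Y^{dt}}$. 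You instead bypass the first global application entirely: on each slice you replace $h_{\mathcal F}$ by the Cohen worst-case half-space $\tilde h_{\mathcal F}$ with threshold calibrated \emph{locally} to match the conditional clean mass $p_{\mathcal F}$, and the static Gaussian bound of \cite{cohen19} together with $\|e\|\le B_r^t$ gives the slice-wise inequality directly. Your route trades one global Neyman--Pearson step for the explicit local formula $\Phi(\Phi^{-1}(p_{\mathcal F})-\cdot/\sigma)$, which makes the monotonicity in $\|e\|$ transparent; the paper's route is more uniform (a chain of three Neyman--Pearson-type inequalities) and never needs to name $p_{\mathcal F}$ or the Cohen formula, working purely with likelihood-ratio level sets. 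Either way, the crucial observations---that the $\mathcal F$-marginal is common to $Y^{dt}$ and $\tilde Y^{dt}$, and that $\tilde\epsilon_t^{dt}$ shares the direction of $\epsilon_t^{dt}$---are used identically.
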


\begin{proof}
Consider $\Gamma_{Y^{dt}} = \{ z \in (\mathcal{T} \times \mathbb{R}^d)^t \mid \mu_{Y^{dt}}(z) \leq q' \mu_X(z)\}$ for some $q' \geq 0$, such that, $\mathbb{P} [X \in \Gamma_{Y^{dt}}] = p$ for some lower-bound $p$ on $\mathbb{P}[h(X) = 1].$
Then, by the Neyman-Pearson Lemma we have that,
\[\mathbb{P} [h(Y^{dt}) = 1] \geq \mathbb{P} [Y^{dt} \in \Gamma_{Y^{dt}}].\]
Now consider a space $\mathcal{H}$ in $(\mathcal{T} \times \mathbb{R}^d)^t$ where all but the last element of the tuple $z$ are fixed, i.e., $\mathcal{H} = \{z \mid z_1 = \tau_1, z_2 = \eta_1, z_3 = \tau_2, z_4 = \eta_2, \ldots, z_{2t-1} = \tau_t\}$
Since, $\epsilon^{dt}$ is a deterministic adversary where each $\epsilon^{dt}_i$ is a deterministic function of the previous $\tau_i$s and $\eta_i$s, each $\epsilon^{dt}_i$ is also fixed in $\mathcal{H}$.
Therefore, in $\mathcal{H}$, both $\mu_X$ and $\mu_{Y^{dt}}$ are two isometric Gaussians in the space of the $\eta_i$s and the set $\mathcal{H} \cap \Gamma_{Y^{dt}}$ is a hyperplane.
The probability assigned by $Y^{dt}$ to $\mathcal{H} \cap \Gamma_{Y^{dt}}$ is proportional to the distance of the center of the corresponding Gaussian.
In the construction of $\tilde{\epsilon}^{dt}$, this distance can only increase, therefore,
\[\mathbb{P} [Y^{dt} \in \Gamma_{Y^{dt}}] \geq \mathbb{P} [\tilde{Y}^{dt} \in \Gamma_{Y^{dt}}] \]
Now, consider a function $h_{\Gamma_{Y^{dt}}}(z)$ which outputs one if $z \in \Gamma_{Y^{dt}}$ and zero otherwise.
Construct the set $\Gamma_{\tilde{Y}^{dt}} = \{z \in (\mathcal{T} \times \mathbb{R}^d)^t \mid \mu_{\tilde{Y}^{dt}}(z) \leq q \mu_X(z)\}$ for some $q \geq 0$ such that,
\[\mathbb{P}[X \in \Gamma_{\tilde{Y}^{dt}}] = p = \mathbb{P}[h_{\Gamma_{Y^{dt}}}(X) = 1].\]
Then, by the Neyman-Pearson Lemma, we have,
\begin{align*}
    \mathbb{P}[h_{\Gamma_{Y^{dt}}}(\tilde{Y}^{dt}) = 1] &\geq \mathbb{P}[\tilde{Y}^{dt} \in \Gamma_{\tilde{Y}^{dt}}]\\
    \text{or,} \quad \mathbb{P} [\tilde{Y}^{dt} \in \Gamma_{Y^{dt}}] &\geq \mathbb{P}[\tilde{Y}^{dt} \in \Gamma_{\tilde{Y}^{dt}}] \tag{from definition of $h_{\Gamma_{Y^{dt}}}$}\\
    \text{or,} \quad \mathbb{P} [Y^{dt} \in \Gamma_{Y^{dt}}] &\geq \mathbb{P}[\tilde{Y}^{dt} \in \Gamma_{\tilde{Y}^{dt}}]\\
    \text{or,} \quad \mathbb{P} [h(Y^{dt}) = 1] &\geq \mathbb{P}[\tilde{Y}^{dt} \in \Gamma_{\tilde{Y}^{dt}}], \tag{from the above two inequalities}
\end{align*}
proving the statement of the lemma.
\end{proof}

Now, we prove lemma~\ref{lem:adp-np} below:
\begin{statement}
 If $\Gamma_{Y^{st}} = \{ z \in (\mathcal{T} \times \mathbb{R}^d)^t \mid \mu_{Y^{st}}(z) \leq q \mu_X(z)\}$ for some $q \geq 0$ and $\mathbb{P} [h(X) = 1] \geq \mathbb{P} [X \in \Gamma_{Y^{st}}]$, then $\mathbb{P} [h(Y^{dt}) = 1] \geq \mathbb{P} [Y^{st} \in \Gamma_{Y^{st}}]$.
\end{statement}

\begin{proof}
Construct the set $\Gamma_{\tilde{Y}^{dt}}$ as defined in the above lemma for a $q \geq 0$ such that $\mathbb{P}[X \in \Gamma_{\tilde{Y}^{dt}}] = p$, for some lower-bound $p$ on $\mathbb{P}[h(X) = 1]$. Then,
\[\mathbb{P} [h(Y^{dt}) = 1] \geq \mathbb{P} [\tilde{Y}^{dt} \in \Gamma_{\tilde{Y}^{dt}}]\]
Now consider the structured adversary $\epsilon^{st}$ in which $\epsilon^{st}_1 = (B, 0, \ldots, 0)$ and $\epsilon^{st}_i = (0, 0, \ldots, 0)$ for $i > 1$.
Define the set $\Gamma_{Y^{st}} = \{ z \in (\mathcal{T} \times \mathbb{R}^d)^t \mid \mu_{Y^{st}}(z) \leq q \mu_X(z)\}$ for the same $q$ as above. Then, we can show that:
\begin{enumerate}
    \item $\mathbb{P}[\tilde{Y}^{dt} \in \Gamma_{\tilde{Y}^{dt}}] = \mathbb{P}[Y^{st} \in \Gamma_{Y^{st}}]$, and
    \item $\mathbb{P}[X \in \Gamma_{\tilde{Y}^{dt}}] = \mathbb{P}[X \in \Gamma_{Y^{st}}]$
\end{enumerate}
which, in turn, prove the statement of the lemma.

Let $\mathcal{N}$ and $\mathcal{N}_{\epsilon_i}$ represent Gaussian distributions centered at origin and $\epsilon_i$ respectively.
Then, we can write $\mu_X$ and $\mu_Y$ as below:
\begin{align*}
    \mu_X(z) &= \prod_{i=1}^t \mu_{T_i}(\tau_i \mid \tau_1, \eta_1, \tau_2, \eta_2, \ldots, \tau_{i-1}, \eta_{i-1}) \mu_{\mathcal{N}}(\eta_i)\\
    \mu_{\tilde{Y}^{dt}}(z) &= \prod_{i=1}^t \mu_{T_i}(\tau_i \mid \tau_1, \eta_1, \tau_2, \eta_2, \ldots, \tau_{i-1}, \eta_{i-1}) \mu_{\mathcal{N}_{\tilde{\epsilon}^{dt}_i}}(\eta_i)
\end{align*}
where $\mu_{T_i}$ is the conditional probability distribution of token $\tau_i$ given the previous tokens and offsets.
Therefore,
\begin{align*}
    \frac{\mu_{\tilde{Y}^{dt}}(z)}{\mu_X(z)} &= \prod_{i=1}^t \frac{\mu_{\mathcal{N}_{\tilde{\epsilon}^{dt}_i}}(\eta_i)}{\mu_{\mathcal{N}}(\eta_i)} = \prod_{i=1}^t e^\frac{\eta_i^T\eta_i - (\eta_i - \tilde{\epsilon}^{dt}_i)^T(\eta_i - \tilde{\epsilon}^{dt}_i)}{2\sigma^2}\\
    \frac{\mu_{\tilde{Y}^{dt}}(z)}{\mu_X(z)} & \leq q \iff \sum_{i=1}^t 2\eta_i^T\tilde{\epsilon}^{dt}_i - (\tilde{\epsilon}^{dt}_i)^T\tilde{\epsilon}^{dt}_i \leq 2\sigma^2 \ln q 
\end{align*}

Consider a round $j \leq t$ such that $\tilde{\epsilon}^{dt}_i = 0, \forall i > j + 1$ and $\tilde{\epsilon}^{dt}_{j+1} = (B_r^{j+1}, 0, \ldots, 0)$.
We can always find such a $j$ as we always have $\tilde{\epsilon}^{dt}_{t+1} = (B_r^{t+1}, 0, \ldots, 0)$, since $B_r^{t+1} = 0$.
Note that, $B_r^{j+1} = \sqrt{B^2 - \left( B_u^{j+1} \right)^2}$ and in turn $\tilde{\epsilon}^{dt}_{j+1}$ are functions of $\tau_1, \eta_1, \tau_2, \eta_2, \ldots, \tau_j$ and not $\tau_{j+1}$.
Let $\mathcal{H} = \{z \mid z_1 = \tau_1, z_2 = \eta_1, z_3 = \tau_2, z_4 = \eta_2, \ldots, z_{2j-1} = \tau_j\}$ be the set of points whose first $2j-1$ coordinates are fixed to an arbitrary set of values $\tau_1, \eta_1, \tau_2, \eta_2, \ldots, \tau_j$.
For points in $\mathcal{H}$, all $\tilde{\epsilon}^{dt}_i$ for $i \leq j+1$ are fixed and for $i > j+1$ are set to zero.
Let $\tilde{Y}^{dt}_{\mathcal{H}}$ denote the random variable representing the distribution of points in $\mathcal{H}$ defined by the adversary $\tilde{\epsilon}^{dt}$ (corresponding random variable $\tilde{Y}^{dt}$). In the space of $\eta_j, \eta_{j+1}, \ldots, \eta_t$, this is an isometric Gaussian centered at $(\tilde{\epsilon}^{dt}_j, \tilde{\epsilon}^{dt}_{j+1}, 0, \ldots, 0)$.
Therefore, $\Gamma \cap \mathcal{H}$ is given by
\begin{align}
\notag \sum_{i=1}^{j+1} 2\eta_i^T\tilde{\epsilon}^{dt}_i - (\tilde{\epsilon}^{dt}_i)^T\tilde{\epsilon}^{dt}_i &\leq 2\sigma^2 \ln t\\
\label{eq:intersect} \text{or,} \quad \eta_j^T\tilde{\epsilon}^{dt}_j + \eta_{j+1}^T\tilde{\epsilon}^{dt}_{j+1} &\leq \beta,
\end{align}
for some constant $\beta$ dependent on $\eta_1, \tilde{\epsilon}^{dt}_1, \ldots, \eta_{j-1}, \tilde{\epsilon}^{dt}_{j-1}, \sigma$ and $t$.
The probability assigned by the Gaussian random variable $Y_{\mathcal{H}}$ to the half-space defined by~(\ref{eq:intersect}) is proportional to the distance of the center of the Gaussian from the hyper-plane in~(\ref{eq:intersect}), which is equal to:
\[\frac{\|\tilde{\epsilon}^{dt}_j\|^2 + \|\tilde{\epsilon}^{dt}_{j+1}\|^2 - \beta}{\sqrt{\|\tilde{\epsilon}^{dt}_j\|^2 + \|\tilde{\epsilon}^{dt}_{j+1}\|^2}}
=\frac{(B_r^j)^2 - \beta}{B_r^j},\]
where the equality follows from:
\begin{align*}
    \|\tilde{\epsilon}^{dt}_j\|^2 + \|\tilde{\epsilon}^{dt}_{j+1}\|^2 &= \|\tilde{\epsilon}^{dt}_j\|^2 + (B_r^{j+1})^2\\
    &= \|\tilde{\epsilon}^{dt}_j\|^2 + B^2 - (B_u^{j+1})^2 && \text{(from definition of $B_r^i$)}\\
    &= \|\tilde{\epsilon}^{dt}_j\|^2 + B^2 - (\|\tilde{\epsilon}^{dt}_1\|^2 + \|\tilde{\epsilon}^{dt}_2\|^2 + \ldots + \|\tilde{\epsilon}^{dt}_j\|^2)\\
    &= B^2 - (\|\tilde{\epsilon}^{dt}_1\|^2 + \|\tilde{\epsilon}^{dt}_2\|^2 + \ldots + \|\tilde{\epsilon}^{dt}_{j-1}\|^2)\\
    &= B^2 - (B_u^j)^2 = (B_r^j)^2.
\end{align*}
Now, consider an adversary $\tilde{\epsilon}^{dt'}$ such that $\tilde{\epsilon}^{dt'}_i = \tilde{\epsilon}^{dt}_i, \forall i \leq j-1$, $\tilde{\epsilon}^{dt'}_j = (B_r^j, 0, \ldots, 0)$, and $\tilde{\epsilon}^{dt}_i = 0, \forall i > j$.
Let $\tilde{Y}^{dt'}$ be the corresponding random variable.
Define $\Gamma_{\tilde{Y}^{dt'}}$ similar to $\Gamma_{\tilde{Y}^{dt}}$.
Then, $\Gamma_{\tilde{Y}^{dt'}} \cap \mathcal{H}$ is given by
\begin{align}
\label{ineq:intersection-rotated}
\eta_j^T(B_r^j, 0, \ldots, 0) \leq \beta,
\end{align}
which is obtained by replacing $\tilde{\epsilon}^{dt}_j$ with $(B_r^j, 0, \ldots, 0)$ and $\tilde{\epsilon}^{dt}_{j+1}$ with $(0, 0, \ldots, 0)$ in inequality~(\ref{eq:intersect}) about the origin.
Define $\tilde{Y}^{dt'}_{\mathcal{H}}$ similar to $\tilde{Y}^{dt}_{\mathcal{H}}$, and just like $\tilde{Y}^{dt}_{\mathcal{H}}$, the distribution of $\tilde{Y}^{dt'}_{\mathcal{H}}$ is also an isometric Gaussian, but is centered at $((B_r^j, 0, \ldots, 0), (0, 0, \ldots, 0))$.
The probability assigned by this Gaussian distribution to $\Gamma_{\tilde{Y}^{dt'}} \cap \mathcal{H}$ is proportional to the distance of its center to the hyper-plane defining the region in~(\ref{ineq:intersection-rotated}), which is equal to $((B_r^j)^2 - \beta)/B_r^j$.
Therefore,
\[\mathbb{P}[\tilde{Y}^{dt}_{\mathcal{H}} \in \Gamma_{\tilde{Y}^{dt}} \cap \mathcal{H}] = \mathbb{P}[\tilde{Y}^{dt'}_{\mathcal{H}} \in \Gamma_{\tilde{Y}^{dt'}} \cap \mathcal{H}].\]
The key intuition behind this step is that, for isometric Gaussian smoothing distribution, the worst-case probability assigned by the adversarial distribution only depends on the magnitude of the perturbation and not its direction. Figure~\ref{fig:pert_rotation} illustrates this property for a two-dimensional input space.

\begin{figure}
    \centering
    \includegraphics[trim=1cm 17cm 0.8cm 1cm, clip, width=0.9\textwidth]{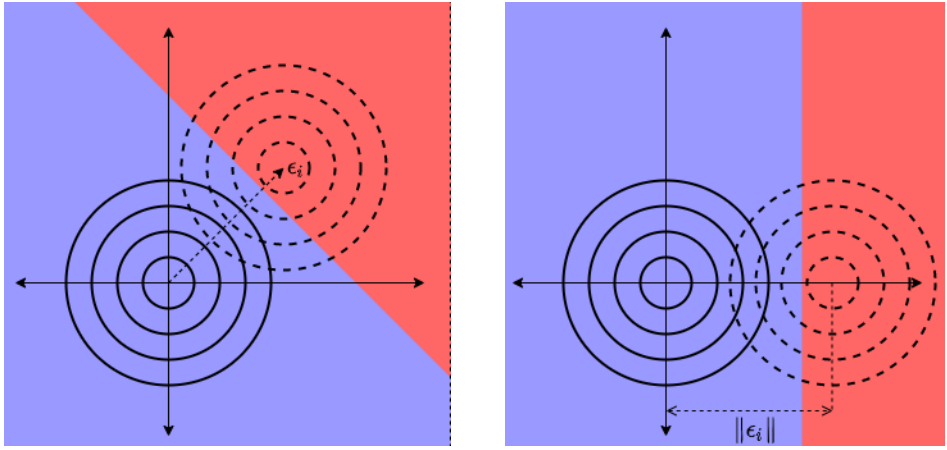}
    \caption{General adversarial perturbation vs. perturbation aligned along the first dimension. Blue and red regions denote where the worst-case function is one and zero respectively.}
    \label{fig:pert_rotation}
    \vspace{-3mm}
\end{figure}

Since both adversaries are same up to step $j-1$, their respective distributions over $z_1, z_2, \ldots, z_{2j-1}$ remains same as well, i.e., $p_{\tilde{Y}^{dt}}(z_1, z_2, \ldots, z_{2j-1}) = p_{\tilde{Y}^{dt'}}(z_1, z_2, \ldots, z_{2j-1})$.
Integrating over the space of all tuples $(z_1, z_2, \ldots, z_{2j-1})$, we have:
\begin{align*}
    \int \mathbb{P}[\tilde{Y}^{dt}_{\mathcal{H}} &\in \Gamma_{\tilde{Y}^{dt}} \cap \mathcal{H}] p_{\tilde{Y}^{dt}}(z_1, z_2, \ldots, z_{2j-1}) dz_1 dz_2 \ldots dz_{2j-1}\\
    &= \int \mathbb{P}[\tilde{Y}^{dt'}_{\mathcal{H}} \in \Gamma_{\tilde{Y}^{dt'}} \cap \mathcal{H}] p_{\tilde{Y}^{dt'}}(z_1, z_2, \ldots, z_{2j-1}) dz_1 dz_2 \ldots dz_{2j-1}\\
    &\implies \mathbb{P}[\tilde{Y}^{dt}_{\mathcal{H}} \in \Gamma_{\tilde{Y}^{dt}}] = \mathbb{P}[\tilde{Y}^{dt'}_{\mathcal{H}} \in \Gamma_{\tilde{Y}^{dt'}}],
\end{align*}
Since the distribution defined by $X$ (with no adversary) over the space of $\eta_i$s is a Gaussian centered at origin whose distance to both $\Gamma_{\tilde{Y}^{dt}} \cap \mathcal{H}$ and $\Gamma_{\tilde{Y}^{dt'}} \cap \mathcal{H}$ is the same (equal to $- \beta/B_r^j$), it assigns the same probability to both~(\ref{eq:intersect}) and ~(\ref{ineq:intersection-rotated}).
Therefore,
\[\mathbb{P} [X \in \Gamma_{\tilde{Y}^{dt}}] = \mathbb{P} [X \in \Gamma_{\tilde{Y}^{dt'}}].\]
Thus, we have constructed an adversary with one less non-zero $\epsilon_i$. Applying, this step sufficiently many times we can obtain the adversary $\epsilon^{st}$ such that,
\[\mathbb{P}[\tilde{Y}^{dt} \in \Gamma_{\tilde{Y}^{dt}}] = \mathbb{P}[Y^{st} \in \Gamma_{Y^{st}}] \quad \text{and} \quad \mathbb{P}[X \in \Gamma_{\tilde{Y}^{dt}}] = \mathbb{P}[X \in \Gamma_{Y^{st}}]\]
which completes the proof.
\end{proof}

\section{Proof of Theorem~\ref{thm:robustness}}
\label{sec:proof-thm-robustness}
\begin{statement}
For an isometric Gaussian smoothing noise with variance $\sigma^2$, if $\mathbb{P}[h(X) = 1] \geq p$, then:
\[\mathbb{P}[h(Y) = 1] \geq \Phi ( \Phi^{-1}(p) - B/\sigma).\]
\end{statement}

\begin{proof}
Define $\Gamma_Y = \{z \in (\mathcal{T} \times \mathbb{R}^d)^t \mid \mu_{Y}(z) \leq q \mu_X(z)\}$ for an appropriate $q$ such that $\mathbb{P}[X \in \Gamma_Y] = p$.
Then, by the Neyman-Pearson lemma, we have $\mathbb{P}[h(Y) = 1] \geq \mathbb{P}[Y \in \Gamma_Y]$.
Applying lemma~1
, we know that there exists a deterministic adversary $\epsilon^{dt}$ represented by random variable $Y^{dt}$, such that, 
\begin{equation}
\label{eq:rob_thm_ineq1}
\mathbb{P}[h(Y) = 1] \geq \mathbb{P}[Y \in \Gamma_Y] \geq \mathbb{P}[Y^{dt} \in \Gamma_Y].
\end{equation}
Now define a function $h_{\Gamma_Y}(z) = \mathbf{1}_{\{z \in \Gamma_Y\}}$ and a set $\Gamma_{Y^{dt}} = \{z \in (\mathcal{T} \times \mathbb{R}^d)^t \mid \mu_{Y^{dt}}(z) \leq q' \mu_X(z)\}$ for an appropriate $q' > 0$, such that, $\mathbb{P}[X \in \Gamma_{Y^{dt}}] = \mathbb{P}[h_{\Gamma_Y}(X) = 1] = p$.
Applying the Neyman-Pearson lemma again, we have:
\begin{align*}
\mathbb{P}[h_{\Gamma_Y}(Y^{dt}) = 1] &\geq \mathbb{P}[Y^{dt} \in \Gamma_{Y^{dt}}]\\
\text{or,} \quad \mathbb{P}[Y^{dt} \in \Gamma_{Y}] &\geq \mathbb{P}[Y^{dt} \in \Gamma_{Y^{dt}}] \tag{from definition of $h_{\Gamma_Y}$}\\
\text{or,} \quad \mathbb{P}[h(Y) = 1] &\geq \mathbb{P}[Y^{dt} \in \Gamma_{Y^{dt}}] \tag{from inequality~(\ref{eq:rob_thm_ineq1})}
\end{align*}
Define $h_{\Gamma_{Y^{dt}}}(z) = \mathbf{1}_{\{z \in \Gamma_{Y^{dt}}\}}$.
For the structured adversary $\epsilon^{st}$ represented by $Y^{st}$, define $\Gamma_{Y^{st}} = \{z \in (\mathcal{T} \times \mathbb{R}^d)^t \mid \mu_{Y^{st}}(z) \leq q'' \mu_X(z)\}$ for an appropriate $q'' > 0$, such that, $\mathbb{P}[X \in \Gamma_{Y^{st}}] = \mathbb{P}[h_{\Gamma_{Y^{dt}}}(X) = 1] = p$.
Applying lemma~3
, we have:
\begin{align*}
    \mathbb{P} [h_{\Gamma_{Y^{dt}}}(Y^{dt}) = 1] &\geq \mathbb{P} [Y^{st} \in \Gamma_{Y^{st}}]\\
    \mathbb{P} [Y^{dt} \in \Gamma_{Y^{dt}}] &\geq \mathbb{P} [Y^{st} \in \Gamma_{Y^{st}}] \tag{from definition of $h_{\Gamma_{Y^{dt}}}$}\\
    \mathbb{P}[h(Y) = 1] &\geq \mathbb{P} [Y^{st} \in \Gamma_{Y^{st}}] \tag{since $\mathbb{P}[h(Y) = 1] \geq \mathbb{P}[Y^{dt} \in \Gamma_{Y^{dt}}]$}
\end{align*}
$\Gamma_{Y^{st}}$ is defined as the set of points $z$ which satisfy:
\begin{align*}
    \frac{\mu_{Y^{st}}(z)}{\mu_X(z)} \leq q'' \quad &\text{or,} \quad
    \frac{\mu_{\mathcal{N}_{\tilde{\epsilon}^{st}_1}}(\eta_1)}{\mu_{\mathcal{N}}(\eta_1)} \leq q''\\
    \eta_1^T(B, 0, \ldots, 0) \leq \beta \quad &\text{or,} \quad
    \{\eta_1\}_1 \leq \beta/B
\end{align*}
for some constant $\beta$.
This is the set of all tuples $z$ where the first coordinate of $\eta_1$ is below a certain threshold $\gamma$.
Since $\mathbb{P}[X \in \Gamma_{Y^{st}}] = p$,
\[\Phi(\gamma / \sigma) = p \implies \gamma = \sigma \Phi^{-1}(p).\]
Therefore,
\[\mathbb{P} [Y^{st} \in \Gamma_{Y^{st}}] = \Phi \left( \frac{\gamma - B}{\sigma} \right) = \Phi ( \Phi^{-1}(p) - B/\sigma).\]
\end{proof}

\section{Additional Cartpole Results}
We performed two additional experiment on Cartpole: we tested at larger noise levels, ($\sigma = 0.6 $ and $0.8$) and we tested a variant of the agent  architecture.
Specifically, in addition to the agent shown in the main text, which uses five frames of observation, we also tested an agent which uses only a single frame. 
Unlike the Atari environment, the task is in fact solvable (in the non-adversarial case) using only one frame: the observation vector represents the complete system state.  We computed certificates for the policy-smoothed version of this model, and tested attacks on the undefended version. (We did not test attacks on the smoothed single-frame variant).
As we see in Figure \ref{fig:cartpole_results_appendix}, we achieve non-vacuous certificates in both settings (i.e, at large perturbation sizes, the smoothed agent is guaranteed to be  more robust than the empirical robustness of a non-smoothed agent). However, observe that the undefended agent in the multi-frame setting is much more vulnerable to adversarial attack. This is likely because the increased number of total features (20 vs. four) introduces more complexity of the Q-network, making it more vulnerable against adversarial attack.
\begin{figure}
    \centering
    \includegraphics[width=\textwidth]{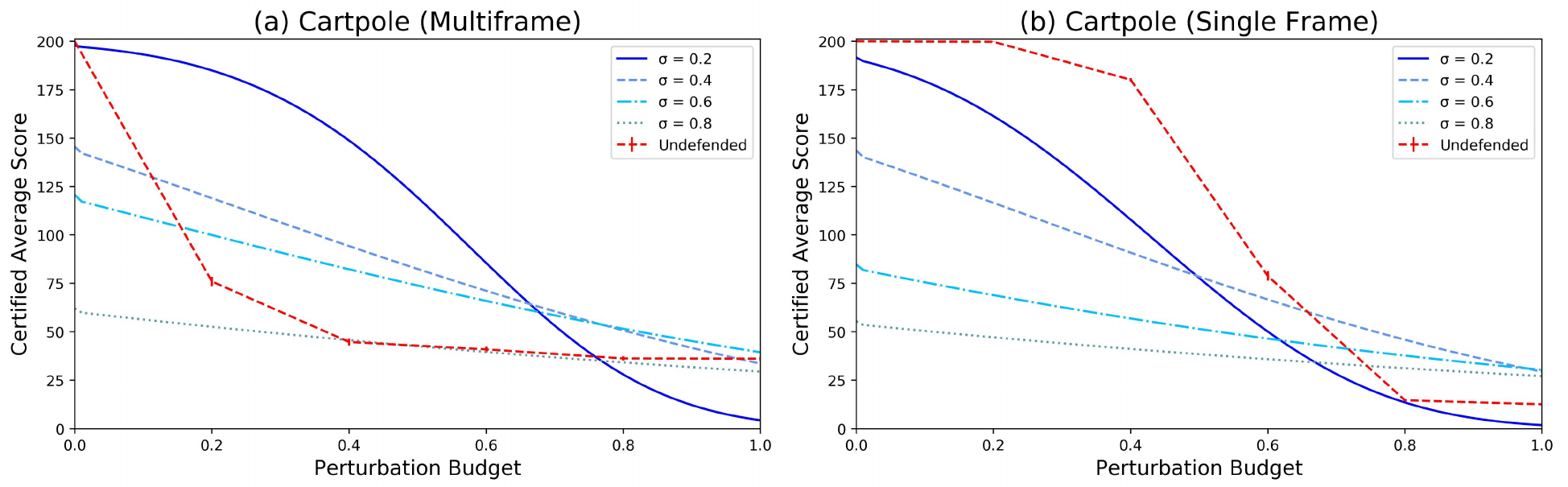}
    \caption{Additional Cartpole results. Attacks on smoothed agents at all $\sigma$ for the multiframe agents are presented in Appendix \ref{sec:addit_attacks}}
    \label{fig:cartpole_results_appendix}
\end{figure}
\section{Full Pong Game}
 In Figure \ref{fig:pong_results}, we explore a failure case of our technique: we fail to produce non-vacuous certificates for a full Pong game, where the game ends after either player scores 21 goals. In particular, while, for the one-round Pong game, the smoothed agent is provably more robust than the empirical performance of the undefended agent, this is clearly not the case for the full game.  To understand why our certificate is vacuous here, note that in the the ``worst-case'' environment that our certificate assumes, any perturbation will (maximally) affect all future rewards. However, in the multi-round Pong game, each round of the game is only loosely coupled to the previous rounds (the ball momentum -- but not position -- as well as the paddle positions are retained). Therefore, any perturbation can only have a very limited effect on the total reward.
Another way to think about this is to recall that in smoothing-based certificates, the noise added to \textit{each} feature is proportional to the \textit{total} perturbation budget of the adversary. In this sort of serial game, the perturbation budget required to attack the average reward scales with the (square root of the) number of rounds, but the noise tolerance of the agent does not similarly scale.
\begin{figure}
    \centering
    \includegraphics[width=\textwidth]{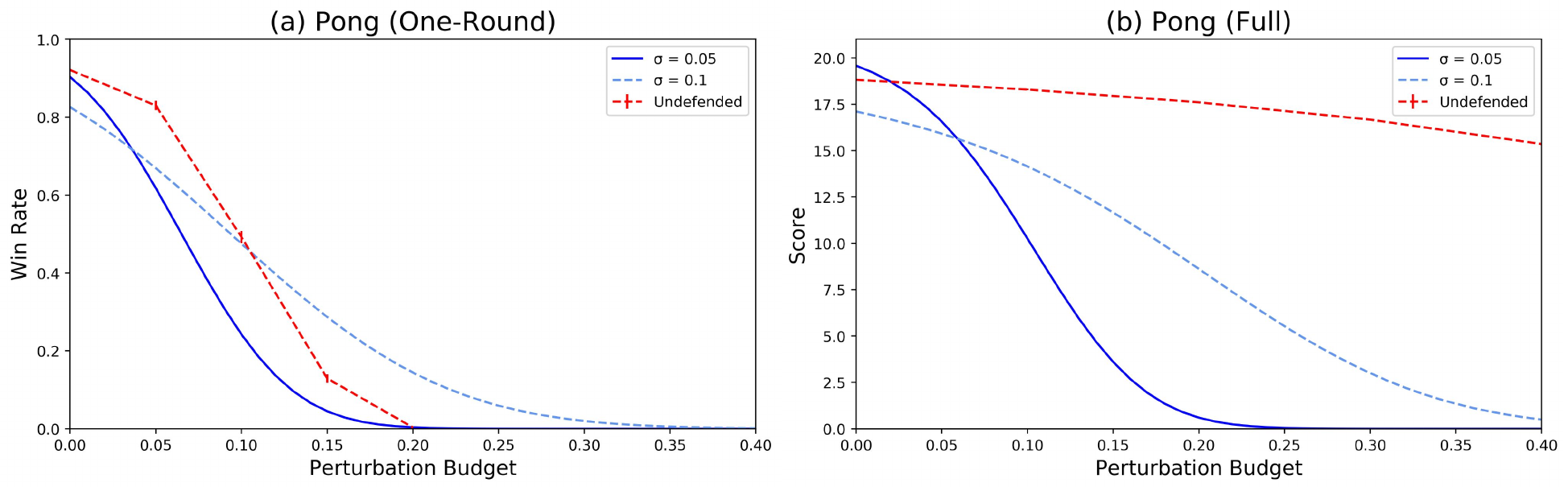}
    \caption{Results for the Full Pong game, compared to the single-round game.}
    \label{fig:pong_results}
\end{figure}
\section{Training and Clean Test Results}
In Figure \ref{fig:clean_test_appx}, we present the clean (non-attacked) test performance for the experiments presented in the main text, as a function of the smoothing noise $\sigma$. 

In Figure \ref{fig:clean_train_appx}, we present the clean training (i.e., validation round) performance as a function of the training time step and the smoothing noise $\sigma$. Note that early stopping was applied: the model from the best validation round was kept, and only replaced if a strictly better validation performance was recorded later.
\begin{itemize}
    \item For Cartpole: logs were not kept after the first time an evaluation round  had a perfect  average score of 200 (this is because the ``best model''  was saved for this evaluation, and it would be impossible to beat this score, so training was not continued). However, for other tasks (i.e. mountain car) logs continued after a perfect evaluation round.
    \item For Freeway: as mentioned in Appendix Section \ref{sec:env_details}, we trained 5 times at each noise level, and kept the best of all 5 models. All 5 training curves are shown here for each noise level.
\end{itemize}
\begin{figure}
    \centering
    \includegraphics[width=\textwidth]{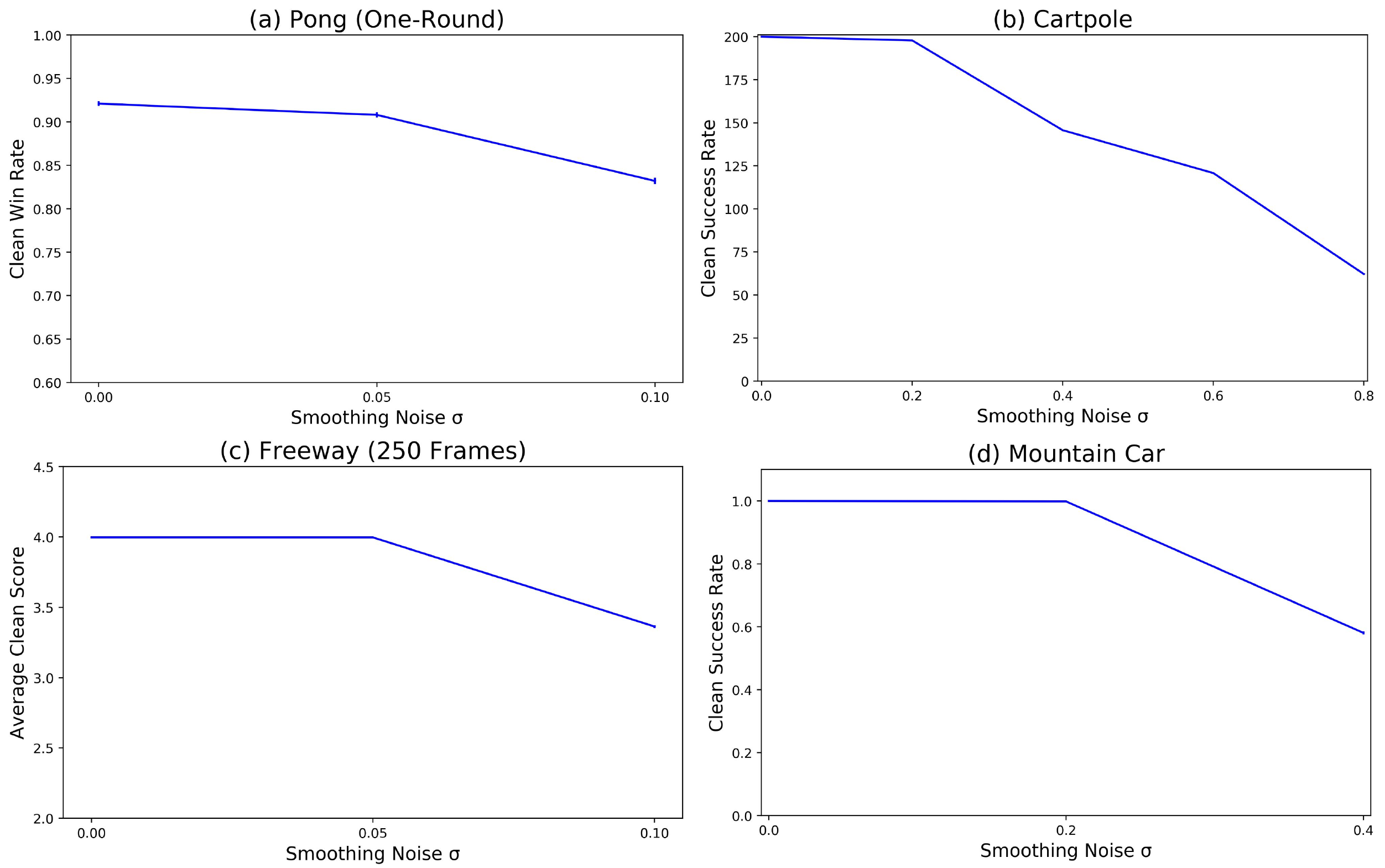}
    \caption{Clean test performance as a function of smoothing noise $\sigma$.}
    \label{fig:clean_test_appx}
\end{figure}
\begin{figure}
    \centering
    \includegraphics[width=\textwidth]{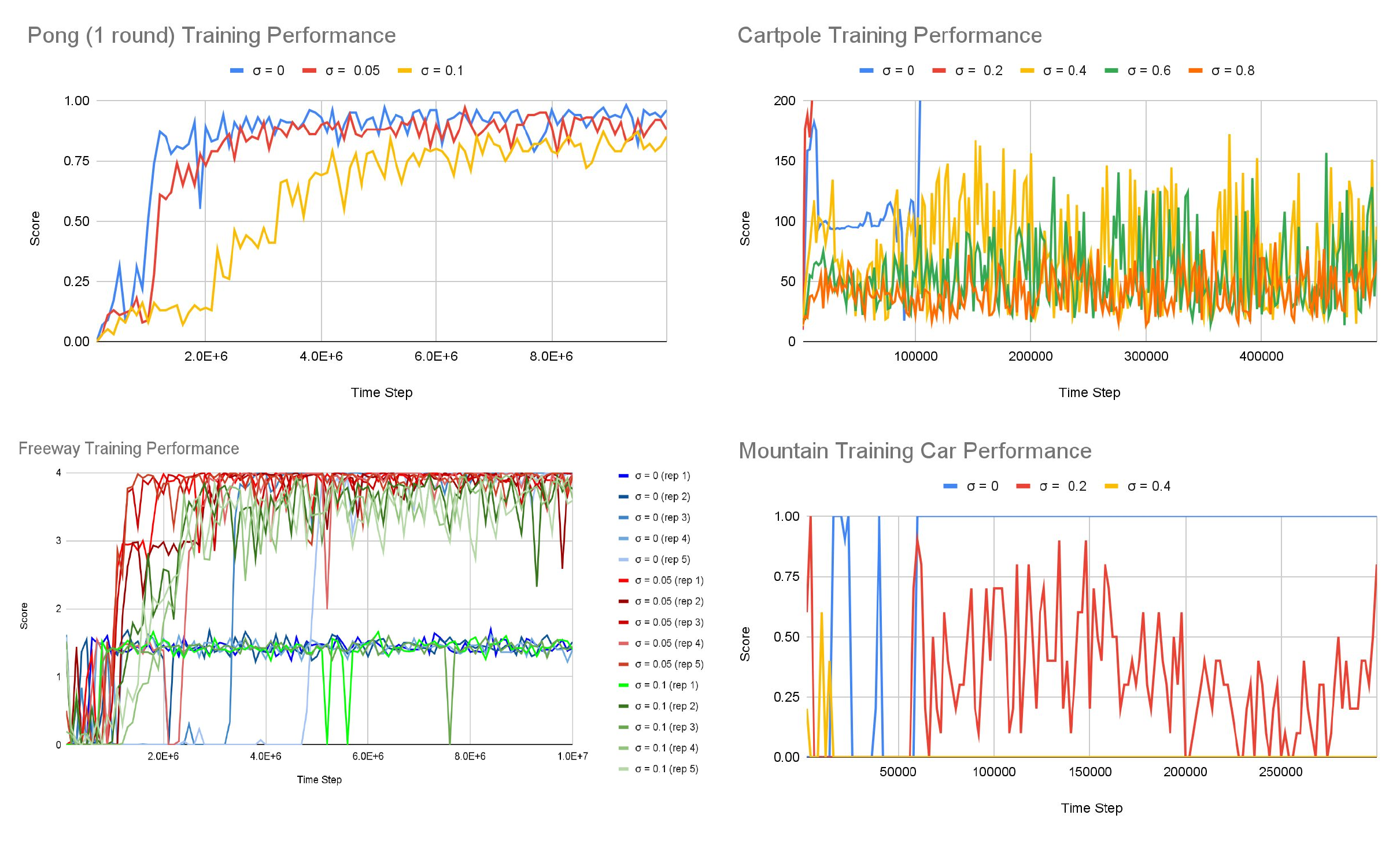}
    \caption{Clean training performance as a function of smoothing noise $\sigma$ and training step.}
    \label{fig:clean_train_appx}
\end{figure}
\section{Complete Attack Results} \label{sec:addit_attacks}

In Figures \ref{fig:undef_lambda} and \ref{fig:def_lambda}, we report the empirical robustness under attack for all  tested values of $\lambda_Q$: in the main text, we show only the result for the $\lambda_Q$ that represents the strongest attack. Figure \ref{fig:def_lambda} also shows the attacks on smoothed agents for all smoothing noises. All attack results are means over 1000 episodes (except for Mountain Car results, where 250 episodes were used) and error bars represent the standard error of the mean.
\begin{figure}
    \centering
    \includegraphics[width=\textwidth]{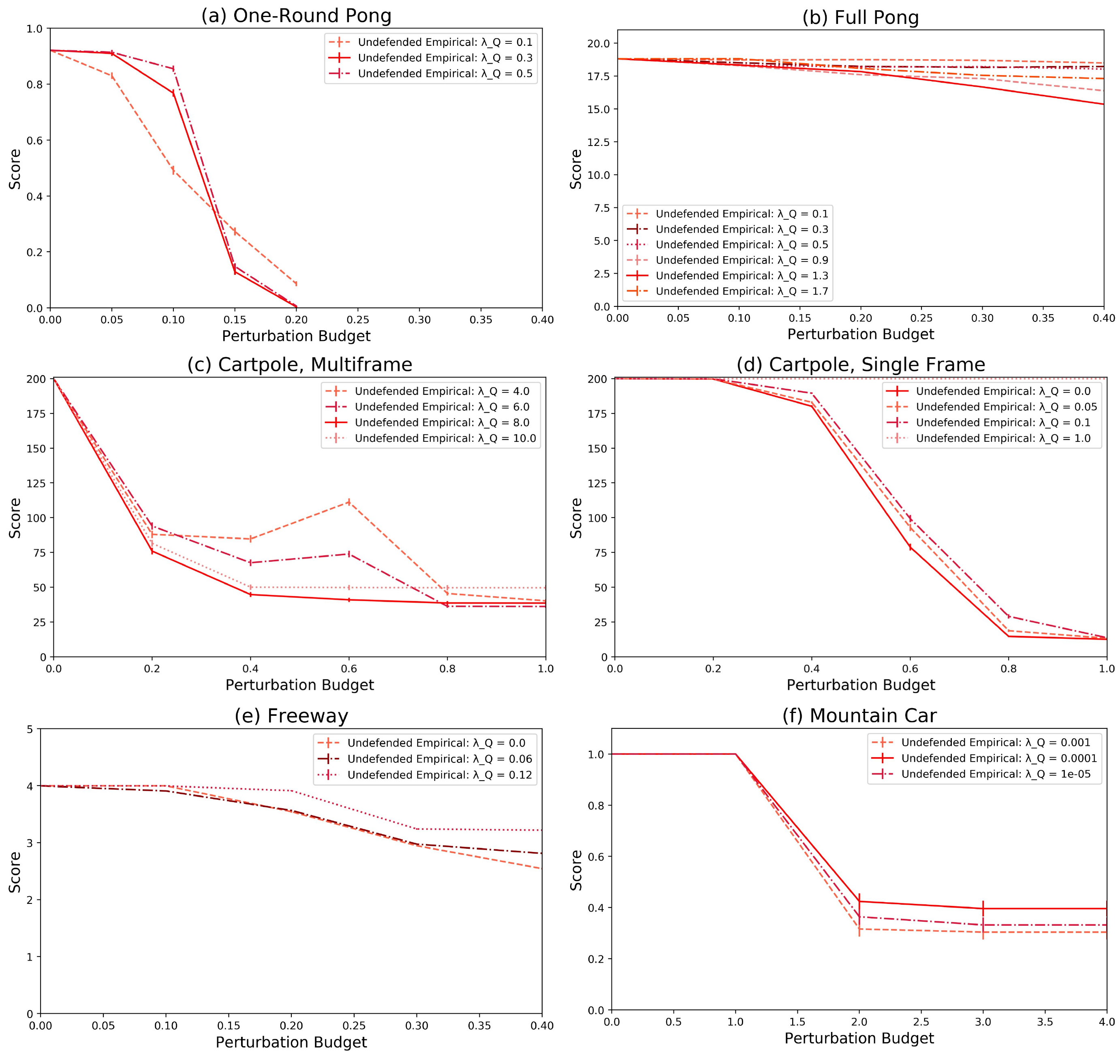}
    \caption{Empirical robustness of undefended agents on for all tested values of $\lambda_Q$ (or  $\lambda$). The results in the main text are the pointwise minima over $\lambda$ of these curves. }
    \label{fig:undef_lambda}
\end{figure}
\begin{figure}
    \centering
    \includegraphics[width=\textwidth]{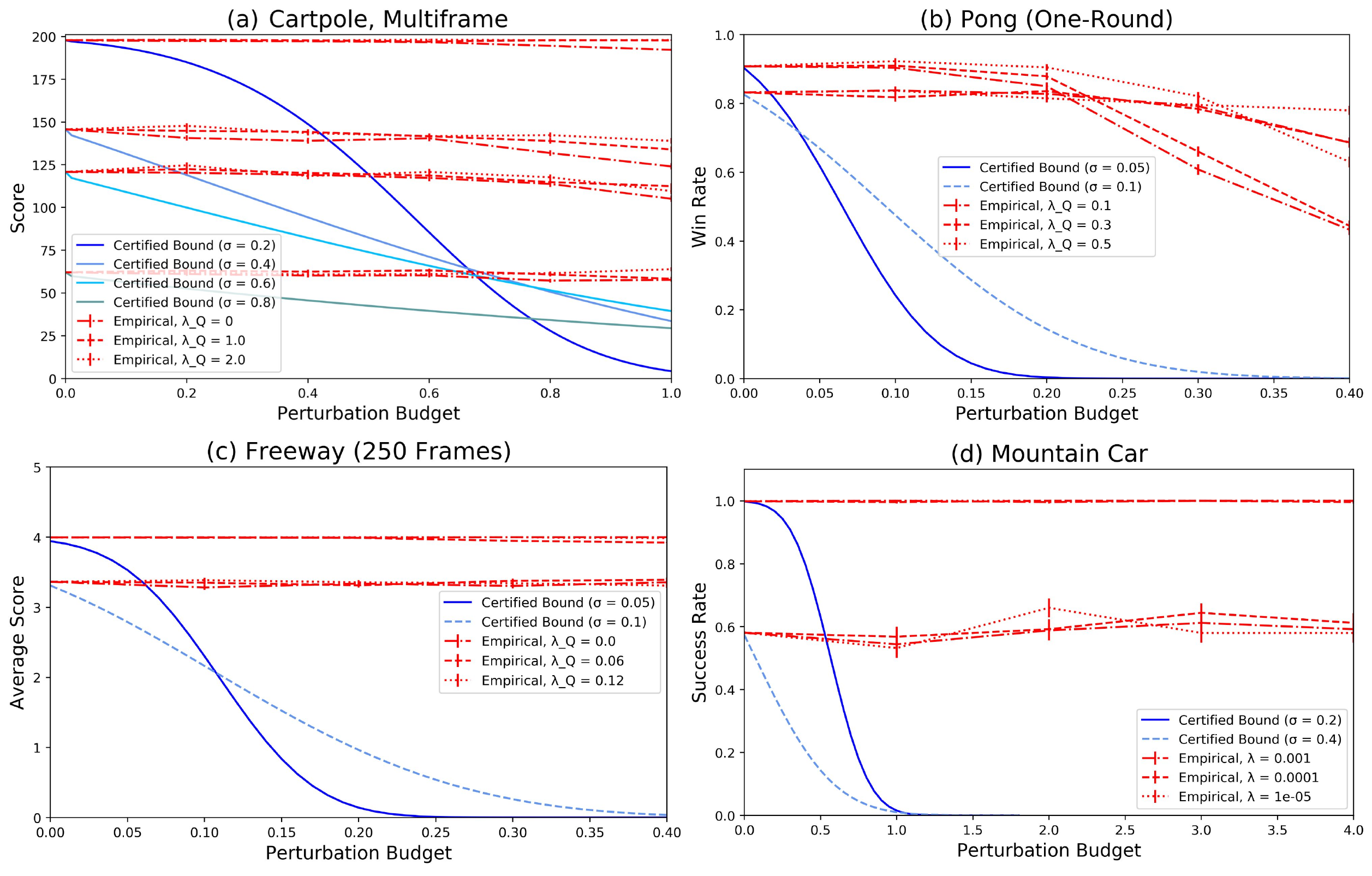}
    \caption{Empirical robustness of smoothed agents on for all tested values of $\sigma$  and  $\lambda_Q$ (or  $\lambda$). We also plot the associated certificate curves. }
    \label{fig:def_lambda}
\end{figure}
\section{Empirical Attack Details}
Our empirical attack on (undefended) RL observations for DQN is described in Algorithm \ref{alg:attack}. To summarize, the core of the attack is a standard targeted $L_2$ PGD attack on the Q-value function. However, because we wish to ``save'' our total perturbation budget $B$ for use in later steps, some modifications are made. First, we only target actions $a$ for which the \textit{clean-observation} Q-value is sufficiently below (by a gap given by the parameter $\lambda_Q$) the Q-value of the `best' action, which would be taken in the absence of adversarial attack.  Among these possible Targets, we ultimately choose whichever action will maximally decrease the Q-value, and which the agent can be successfully be induced to choose within the adversarial budget $B$.  If no such action exists, then the original observation will be returned, and the entire budget will be saved. In order to preserve budget, the PGD optimization is stopped as soon as the ``decision boundary'' is crossed. 

We use a constant step size $\eta$. In order to deal with the variable budget $B$, we optimize of a number of iterations which is a constant multiple $\nu$ of $\frac{B}{\eta}$.

For most environments, there is some context  used by the Q-value function (i.e, the previous frames) which is carried over from previous steps, but is not directly being attacked in this round. We need both the clean version of the context, $C$, in order to evaluate the ``ground-truth'' values of the Q-value function under various actions; as well as the ``dirty'' version of the context, $C'$, based on the adversarial observations which have already been fed to the agent, in order to run the attack optimization.

Our attack for DDPG is described in Algorithm \ref{alg:attack_ddpg}. Here, we use the policy $\pi$ to determine what action $a$ the agent will take when it observes a corrupted observation  $o'$ (with corrupted context $C'$), and use the Q-value function supplied by the DDPG algorithm to determine the ``value'' of that action on the ground-truth observation $o$. Because our goal is to minimize this value, this amounts to minimizing $Q(C;o,\pi(C';o'))$. In order to ensure that a large amount of $
L_2$ ``budget'' is only used when the $Q$ value can be substantially minimized, we  include a regularization term $\lambda\|o-o'\|^2_2$.

Attacks on smoothed agents are described in Appendix \ref{sec:smooth_attacks}.

Note that on image data (i.e., Pong), we do not consider integrality constraints on the observations; however, we do incorporate box constraints on the pixel values.  We also incorporate box constraints on the kinematic quantities when attacking Mountain Car, but not when attacking Cartpole: the distinction is that the constraints in Mountain Car represent artificial constraints on the kinematics [i.e., the velocity of the car is arbitrarily clipped], while the constraints in Cartpole arise naturally from the problem setup.

\section{Environment details and Hyperparameters} \label{sec:env_details}
For Atari games, we use the ``NoFrameskip-v0'' variations of these environments with the standard ``AtariPreprocessing'' wrapper from the OpenAI Gym \citep{brockman2016openai} package: this provides  This environment also injects non-determinism into the originally-deterministic Atari games, by adding randomized ``stickiness'' to the agent's choice of actions -- without this, the state-observation robustness problem could be trivially solved by memorizing a winning sequence of actions, and ignoring all observations at test-time.

Due to instability in training, for the freeway environment, we trained each model five times, and selected the base model based on the performance of validation runs.
See training hyperparameters, Tables \ref{tab:training_hyperparams} and \ref{tab:training_hyperparams_DDPG}. For attack hyperparameters, see Table  \ref{tab:attack_hyperparams} and \ref{tab:attack_hyperparams_DDPG}.
 \begin{table}[]
    \centering
    \begin{tabular}{|c|c|c|c|c|c|}
    \hline
             &  1-Round&Full&Multiframe&Single-frame&\\
         &  Pong&Pong&Cartpole&Cartpole&Freeway\\
                  \hline
         Training discount factor $\gamma$ & 0.99 & 0.99 & 0.99 & 0.99& 0.99 \\
         \hline
         Total timesteps &10000000&10000000&500000&500000&10000000\\
             \hline
        Validation interval (steps) & 100000 & 100000&2000&2000& 100000 \\
                     \hline
        Validation episodes &100&10&10&10&100\\
                     \hline
         Learning Rate& 0.0001&  0.0001& 0.0001& 0.00005& 0.0001\\
             \hline
         DQN Buffer Size& 10000&10000&100000&100000& 10000\\

        \hline
         DQN steps collected &100000 &100000&1000&1000&100000\\
         before learning &&&&&\\
                      \hline
        Fraction of steps &0.1 &0.1 &0.16&0.16 &0.1\\
        for exploration (linearly &&&&&\\
        decreasing exp. rate) &&&&&\\
         \hline
         Initial exploration rate & 1& 1&1& 1& 1\\
         \hline
         Final exploration rate &0.01 &0.01&0&0&0.01 \\
         \hline
         DQN target update  &1000&1000&10&10 &1000\\
         interval (steps)  &&&&\\
         \hline
         Batch size & 32 & 32&1024&1024& 32\\
         \hline
         Training interval (steps) &  4 &  4&256&256&  4\\
         \hline
        Gradient descent steps & 1& 1&128&128 & 1\\
                         \hline
              Frames Used& 4& 4&5&1& 4\\
                 \hline
                      Training Repeats& 1& 1&1&1& 5\\
                 \hline
        Architecture& CNN*& CNN*& MLP& MLP& CNN*\\
                & & & $20\times$ & $4\times $&\\
                & & & $256\times$& $256\times $&\\
                & & & $256\times$& $256\times $&\\

                & & & $2$& $ 2$&\\

                 \hline

    \end{tabular}
    \caption{Training Hyperparameters for DQN models. *CNN refers to the 3-layer convolutional network defined by the CNNPolicy class in stable-baselines3 \citep{stable-baselines3}, based on the CNN architecture used for Atari games by \cite{mnih2015human}. Note that hyperparameters for Atari games are based on hyperparameters from the stable-baselines3 Zoo package \citep{rl-zoo3}, for a slightly different (more deterministic) variant of the Pong environment.}
    \label{tab:training_hyperparams}
    \end{table}
    \begin{table}[]
        \centering
        \begin{tabular}{|c|c|}
        \hline
             & Mountain Car \\
              \hline
        Training discount factor $\gamma$ & 0.99\\
         \hline
         Total timesteps &300000\\
             \hline
        Validation interval (steps) & 2000  \\
                     \hline
        Validation episodes &10\\
                     \hline
         Learning Rate& 0.0001\\
             \hline
         DDPG Buffer Size& 1000000\\
        \hline
         DDPG steps collected &100\\
         before learning &\\
                      \hline
         Batch size& 100\\
             \hline
         Update coefficient $\tau$& 0.005\\
             \hline
         Train frequency & 1 per episode \\
             \hline
         Gradient steps  & = episode length \\
             \hline
         Training action noise  &Ornstein Uhlenbeck ($\sigma = 0.5$) \\
             \hline
             Architecture& MLP 2 $\times $ 400 $\times $ 300 $\times$ 1\\
             \hline
        \end{tabular}
        \caption{Training Hyperparameters for DDPG models. Hyperparameters are based on hyperparameters from the stable-baselines3 Zoo package \citep{rl-zoo3}, for the unmodified Mountain Car environment.}
        \label{tab:training_hyperparams_DDPG}
    \end{table}

\begin{table}[]
    \centering
    \begin{tabular}{|c|c|c|c|c|c|}
    \hline
             &  1-Round&Full&Multiframe&Single-frame&\\
         &  Pong&Pong&Cartpole&Cartpole&Freeway\\
                  \hline
        Attack step size $\eta$&0.01&0.01&0.01&0.01&0.01\\
        \hline
        Attack step multiplier $\nu$&2&2&2&2&2\\
        \hline
        Q-value thresholds $\lambda_Q$ searched  &.1, .3, .5&.1, .3, .5,&4,6,8,10&0, .05,&0, .06, .12\\
         &&.9, 1.3, 1.7&&.1, 1&\\
                  \hline
    \end{tabular}
    \caption{Attack Hyperparameters for DQN models. }
    \label{tab:attack_hyperparams}
\end{table}
\begin{table}[]
    \centering
    \begin{tabular}{|c|c|}
    \hline
             &  Mountain Car\\
                  \hline
        Attack step size $\eta$&0.01\\
        \hline
        Attack steps $\tau$ &100\\
        \hline
        Regularization values $\lambda$ searched  &.001, .0001, .00001\\
                  \hline
    \end{tabular}
    \caption{Attack Hyperparameters for DDPG models. }
    \label{tab:attack_hyperparams_DDPG}
\end{table}
\begin{algorithm}
\KwIn{Q-value function $Q$, clean prior observation context $C$, adversarial prior observation context $C'$, observation $o$, budget $B$, Q-value threshold $\lambda_Q$, step size $\eta$, step multiplier $\nu$ }
\KwOut{Attacked observation $o_\text{worst}$, remaining budget $B'$.}
$Q_\text{clean}: = \max_{a\in A} Q(C; o, a)$\\
$\text{Targets } := \{a \in A| Q(C;o,a) \leq Q_\text{clean}  - \lambda_Q \} $
\\
$Q_\text{worst} := Q_\text{clean}$\\
$o_\text{worst} := o$\\
\For{$a \in \text{Targets}$}{
$o' := o$\\
inner:\\
\For{ $i$ in 1, ..., $\lfloor \frac{\nu B}{\eta} \rfloor$}
{
\If{$\arg \max_{a'} Q(C';o',a') = a$}{
\If{$Q(C;o,a) < Q_\text{worst}$}{
$o_\text{worst}:= o'$\\
 $Q_\text{worst} := Q(C;o,a)$
}
\textbf{break} inner
}
$D := \nabla_{o'} \log ([\text{SoftMax}(Q(C';o',\cdot)]_a)$\\
$o' := o' + \frac{\eta D}{\|D\|_2}$\\
\If{$\|o'-o\|_2 > B$}{$o' := o + \frac{B}{\|o'-o\|_2} (o'-o)$
}
}
}
\Return $o_\text{worst}$, $\sqrt{B^2- \|o_\text{worst} - o\|_2^2}$
\caption{Empirical Attack on DQN Agents} 
\label{alg:attack}
\end{algorithm}

\begin{algorithm}
\KwIn{Q-value function $Q$, policy $\pi$, clean prior observation context $C$, adversarial prior observation context $C'$, observation $o$, budget $B$, weight parameter $\lambda$, step size $\eta$, step count $\tau$ }
\KwOut{Attacked observation $o_\text{worst}$, remaining budget $B'$.}

$o' := o$\\
\For{ $i$ in 1, ..., $\tau$}
{

$D := \nabla_{o'} [Q(C;o,\pi(C';o')) + \lambda \|o' - o\|_2^2]$\\
\If{$\frac{\|D\|_2}{\|o'\|_2} \leq  0.001$}{\textbf{break}}
$o' := o' + \frac{\eta D}{\|D\|_2}$ \\
\If{$\|o'-o\|_2 > B$}{$o' := o + \frac{B}{\|o'-o\|_2} (o'-o)$
}
}
\Return $o'$, $\sqrt{B^2- \|o' - o\|_2^2}$
\caption{Empirical Attack on DDPG Agents} 
\label{alg:attack_ddpg}
\end{algorithm}

\section{Attacks on Smoothed Agents} \label{sec:smooth_attacks}
In order to attack smoothed agents, we adapted Algorithms \ref{alg:attack} and \ref{alg:attack_ddpg} using techniques suggested by \cite{SalmanLRZZBY19} for attacking smoothed classifiers. In particular, whenever the Q-value function is evaluated or differentiated, we instead evaluate/differentiate the mean output under $m=128$ smoothing perturbations. Following \cite{SalmanLRZZBY19}, we use the same noise perturbation vectors at each step during the attack. In the multi-frame case, for the ``dirty'' context $C'$, we include the actually-realized smoothing perturbations used by the agents for previous steps. However, when determining the ``clean'' Q-values $Q(C; o,a) $, for the ``clean'' context $C$, we use the unperturbed previous state observations: we then take the average over $m$ smoothing perturbations of both $C$ and $o$ to determine the clean Q-values. This gives an unbiased estimate for the Q-values of an undisturbed smoothed agent in this state.

When attacking DDPG, in evaluating $Q(C;o,\pi(C';o')$, we average over smoothing perturbations for both $o$ and $o'$, in addition to $C$: this is because both $\pi$ and $Q$ are trained on noisy samples. Note that we use independently-sampled noise perturbations on $o'$ and $o$.

Our attack does not appear to be successful, compared with the lower bound given by our certificate (Figures \ref{fig:def_lambda}). One contributing factor may be that attacking a smoothed \textit{agent} is more difficult that attacking a smoothed \textit{classifier}, for the following reason: a smoothed classifier evaluates the expected output at test time, while a smoothed agent does not. Thus, while the \textit{average} Q-value for the targeted action might be greater than the \textit{average} Q-value for the clean action, the actual realization will depend on the specific realization of the random smoothing vector that the agent actually uses.

\section{Runtimes and Computational Environment}
Each experiment is run on an NVIDIA 2080 Ti GPU. Typical training times are shown in Table \ref{tab:training_time}. Typical clean evaluation times are shown in Table \ref{tab:eval_time}. Typical attack times are shown in Table \ref{tab:attack_time}.
\begin{table}[h]
    \centering
    \begin{tabular}{|c|c|}
    \hline
        Experiment & Time (hours) \\
        \hline
         Pong (1-round) & 11.1\\
         Pong (Full) & 12.0\\
         Cartpole (Multi-frame) & 0.27\\
         Cartpole (Single-frame) & 0.32\\
         Freeway & 14.2\\
         Mountain Car &0.63\\
        \hline
    \end{tabular}
    \caption{Training times}
    \label{tab:training_time}
\end{table}
\begin{table}[h]
    \centering
    \begin{tabular}{|c|c|c|}
    \hline
        Experiment & Time (seconds):& Time (seconds):  \\
        Experiment &smallest noise $\sigma$ & largest noise $\sigma$ \\
        \hline
         Pong (1-round) &0.46&0.38 \\
         Pong (Full) &3.82&4.65\\
         Cartpole (Multi-frame) &0.20&0.13\\
         Cartpole (Single-frame)& 0.18& 0.12\\
        Freeway& 1.36&1.35\\
         Mountain Car &0.67&0.91\\
        \hline
    \end{tabular}
    \caption{Evaluation times. Note that the times reported here are \textit{per episode}: in order to statistically bound the mean rewards, we performed 10,000 such episode evaluations for each environment.}
    \label{tab:eval_time}
\end{table}
\begin{table}[h]
    \centering
    \begin{tabular}{|c|c|c|}
    \hline
        Experiment & Time (seconds): &Time (seconds): \\
        Experiment & smallest budget $B$ & largest budget $B$\\
        \hline
         Pong (1-round) & 1.01 & 0.68\\
         Pong (Full) & 8.84& 10.2\\
         Cartpole (Multi-frame) & 0.35&0.32\\
         Cartpole (Single-frame) &0.79&0.56\\
        Freeway &2.67&2.80\\
         Mountain Car &44.0&19.6\\
        \hline
    \end{tabular}
    \caption{Attack times. Note that the times reported here are \textit{per episode}: in the paper, we report the mean of 1000 such episodes.}
    \label{tab:attack_time}
\end{table}
\section{CDF Smoothing Details}
Due to the very general form of our certification result ($h(\cdot)$, as a 0/1 function, can represent any outcome, and we can bound the lower-bound the probability of this outcome), there are a variety of ways we can use the basic result to compute a certificate for an entire episode entire game. In the main text, we introduce CDF smoothing \cite{Kumar0FG20} as one such option. In CDF smoothing for any threshold value $x$, we can define $h_x(\cdot)$ as an indicator function for the event that the total episode reward is greater than $x$. Then, by the definition of the CDF function, the expectation of $h_x(\cdot)$ is equal to $1 - F(x)$, where $F(\cdot)$ is the CDF function of the reward. Then our lower-bound on the expectation of $h_x(\cdot)$ under adversarial attack is in fact an upper-bound on $F(x)$: combining this with Equation 2 in the main text,
\begin{equation*}
E[\mathcal{X}] = \int_0^\infty (1 - F(x)) dx - \int_{-\infty}^0 F(x) dx,
\end{equation*}
provides a lower bound on the total expectation of the reward under adversarial perturbation.

However, in order to perform this integral from empirical samples, we must bound  $F(x)$ at all points: this requires first upper-bounding the \textit{non-adversarial} CDF function at all $x$, before applying our certificate result. Following \cite{Kumar0FG20}, we accomplish this using the Dvoretzky–Kiefer–Wolfowitz inequality (for the Full Pong environment.) 

In the case of the Cartpole environment, we explore a different strategy: note that the reward at each timestep is itself a 0/1 function, so we can define $h_t(\cdot)$ as simply the reward at timestep $t$. We can then apply our certificate result at each timestep independently, and take a sum. Note that this requires estimating the average reward at each step independently: we use the Clopper-Pearson method (following \cite{cohen19}), and in order to certify in total to the desired 95\% confidence bound, we certify each estimate to (100 - 5/T)\% confidence, where $T$ is the total number of timesteps per episode (= 200).

However, note that, in the particular case of the cartpole environment,  $h_t(\cdot) = 1$ if and only if we have ``survived'' to time-step $t$: in other words, $h_t(\cdot)$ is simply an indicator function for the total reward being $\geq t$. Therefore in this case, this independent estimation method is equivalent to CDF smoothing, just using Clopper-Pearson point-estimates of the CDF function rather than the Dvoretzky–Kiefer–Wolfowitz inequality. In practice, we find that this produced slightly better certificates for this task. (Figure \ref{fig:pevsdkw})
\begin{figure}
    \centering
    \includegraphics[width=\textwidth]{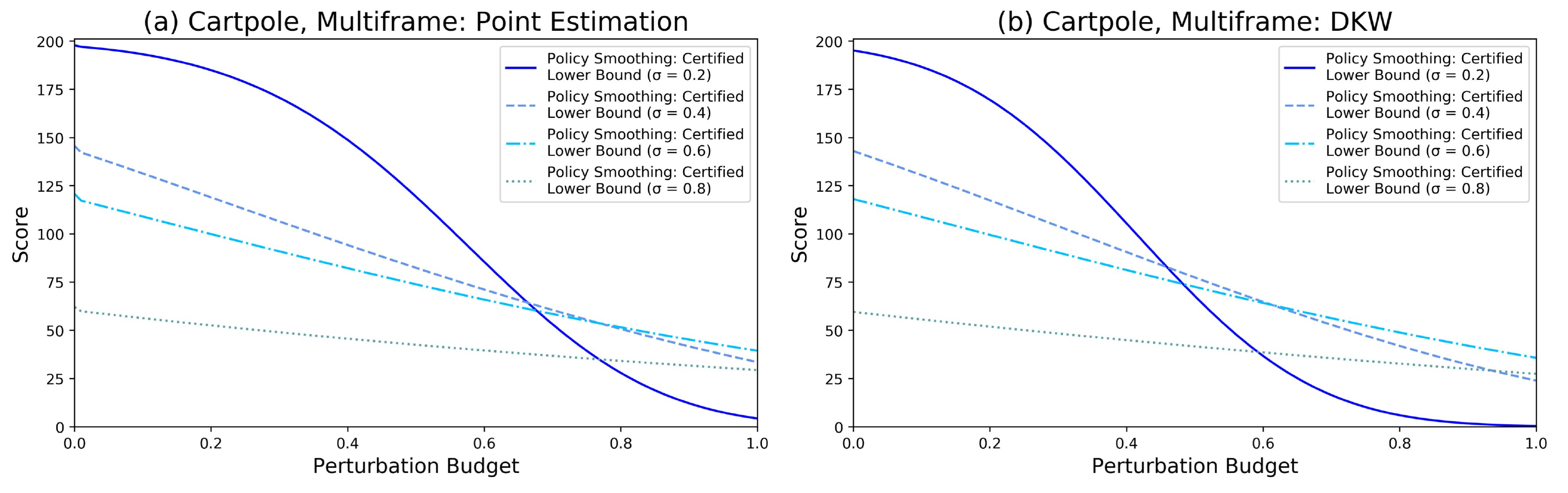}
    \caption{Comparison of certified bounds on the total reward in Cartpole, using (a) point estimation, and (b) the DKW inequality to generate empirical bounds.}
    \label{fig:pevsdkw}
\end{figure}

\section{Environment Licenses}
OpenAI Gym \cite{brockman2016openai} is Copyright 2016 by OpenAI and provided under the MIT License. The stable-baselines3 package\cite{stable-baselines3} is Copyright 2019 by Antonin Raffin and also provided under the MIT License.

\end{document}